 \newcolumntype{M}[1]{>{\centering\arraybackslash}m{#1}}
 \newcommand{\PreserveBackslash}[1]{\let\temp=\\#1\let\\=\temp}
 \newcolumntype{C}[1]{>{\PreserveBackslash\centering}m{#1}}
 \newcolumntype{R}[1]{>{\PreserveBackslash\raggedleft}p{#1}}
 \newcolumntype{L}[1]{>{\PreserveBackslash\raggedright}p{#1}}
 \newtheorem{thm}{Theorem}
 \newtheorem{lem}[thm]{Lemma}
\begin{document}
\mainmatter              % start of a contribution
%
% \title{Visual Localization for Skid-Steering Robots}
\title{Visual-Inertial Localization for Skid-Steering Robots with Kinematic Constraints}
%
% \titlerunning{Visual Localization for Skid-steering Robots}
\titlerunning{Kinematics-constrained visual-inertial localization}  
% abbreviated title (for running head)
%                                     also used for the TOC unless
%                                     \toctitle is used
%
\author{Xingxing Zuo\inst{1}$^,$\inst{2}$^\ddag$  \and Mingming Zhang\inst{2}$^\ddag$  \and Yiming Chen\inst{2}  \and Yong liu\inst{1} \and \\ Guoquan Huang\inst{3} \and Mingyang Li\inst{2} 
\thanks{$^\ddag$ X. Zuo and M. Zhang contribute equally to this work.
This work was supported by supported by Alibaba-Zhejiang University Joint Institute of Frontier Technologies.
} }
\authorrunning{Xingxing Zuo et al.} % abbreviated author list (for running head)
%
%%%% list of authors for the TOC (use if author list has to be modified)
\tocauthor{Xingxing Zuo， Mingyang Li}
\institute{Institute of Cyber-System and Control, Zhejiang University, China,\\
\email{ xingxingzuo@zju.edu.cn,yongliu@iipc.zju.edu.cn}
\and
 A.I. Labs, Alibaba Group, China, \\
\email{
$\{$mingming.zhang, yiming.chen,
mingyangli$\}$@alibaba-inc.com}
\and
Department of Mechanical Engineering, University of Delaware, USA \\
\email{ghuang@udel.edu}
}
\maketitle              % typeset the title of the contribution

\begin{abstract}

While visual localization or SLAM has witnessed great progress in past decades, when deploying it on a mobile robot in practice, 
few works have explicitly considered the kinematic (or dynamic) constraints of the real robotic system when designing state estimators.
To promote the practical deployment of current state-of-the-art visual-inertial localization algorithms, 
in this work we propose a low-cost kinematics-constrained localization system particularly for a skid-steering mobile robot.
In particular, we derive in a principle way the robot's kinematic constraints based on the instantaneous centers of rotation (ICR) model 
and integrate them in a tightly-coupled manner into the sliding-window bundle adjustment (BA)-based visual-inertial estimator.
Because the ICR model parameters are time-varying due to, for example, track-to-terrain interaction and terrain roughness,
we estimate these kinematic parameters online along with the navigation state. 
To this end, we perform in-depth the observability analysis and identify motion conditions under which the state/parameter estimation is viable.
The proposed kinematics-constrained visual-inertial localization system has been validated extensively in different terrain scenarios.

% 	Due to simplicity of mechanical design and stability of dynamic maneuvering, skid-steering mobile robots are widely used in both scientific research and commercial products. While the topics in mechanic design, control, and path planning with skid-steering robots are under active research, the problem of localization is less explored. In this paper, we tackle this problem by employing a tightly-coupled sensor fusion approach. Specifically, we explicitly model the kinematics of skid-steering robots using both track instantaneous centers of rotation (ICRs) and correction factors, compensating for the complexity of track-to-terrain interaction, the imperfectness of mechanical design, terrain smoothness, and so on. Moreover, we propose an algorithm to utilize visual sensors together with wheel encoders to accurately localize skid-steering robots under various conditions. We also provide detailed observability analysis of the proposed method for showing its effectiveness. Finally, we show a number of ground-truth-guided experiments to validate the performance of our proposed algorithms.
	%\keywords{Instantaneous Centers of Rotation (ICRs), Motion Manifold Constraints, Visual-Inertial Odometry, Skid-Steering Robots}
\end{abstract}

%%%%%%%%%%%%%%%%%%%%%%%%%%%%%%%%%%%%%%%%%%%%%%%%%
\section{Introduction}
\label{sec:intro}

It is essential for mobile robots to perform robust and accurate real-time localization when deployed in real-world applications such as autonomous delivery.
While visual localization or SLAM has made significant progress in the last decades, most of current algorithms are generally purposed and not tailored to particular robotic systems --
that is, their design is often independent of robots. 
However, the robotic system  can provide informative state constraints due to its dynamics and/or kinematics, which should be exploited when designing localization algorithms for robots at hand.
In this paper, bearing this in our mind, we develop a kinematics-constrained visual-inertial localization algorithm for skid-steering robots,
which tightly fuses low-cost camera, IMU and odometer sensors to provide high-precision real-time localization solutions in 3D.

\begin{figure}[t]
	\centering
	\subfigure[]{ 
		\includegraphics[width=2.0in]{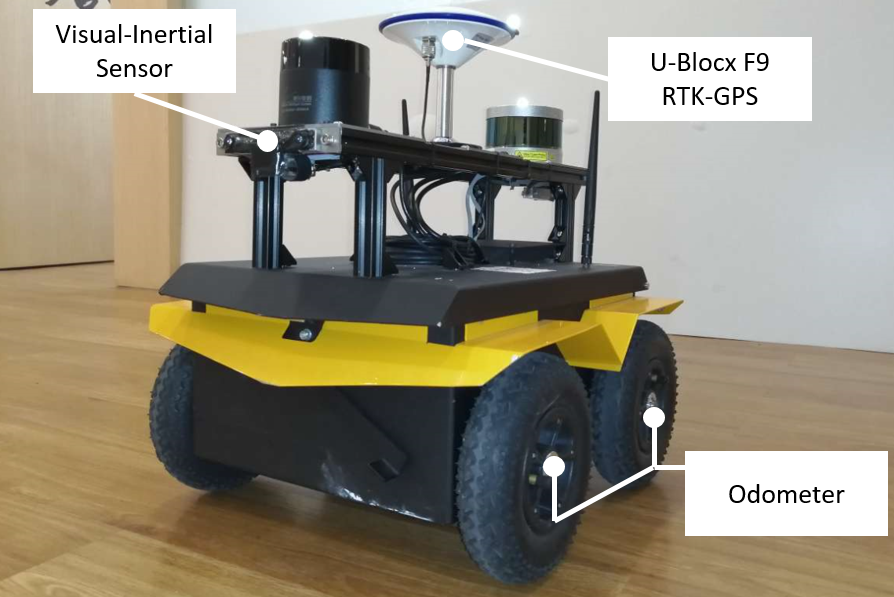} 
	} 
	\subfigure[]{ 
		\includegraphics[width=2.2in]{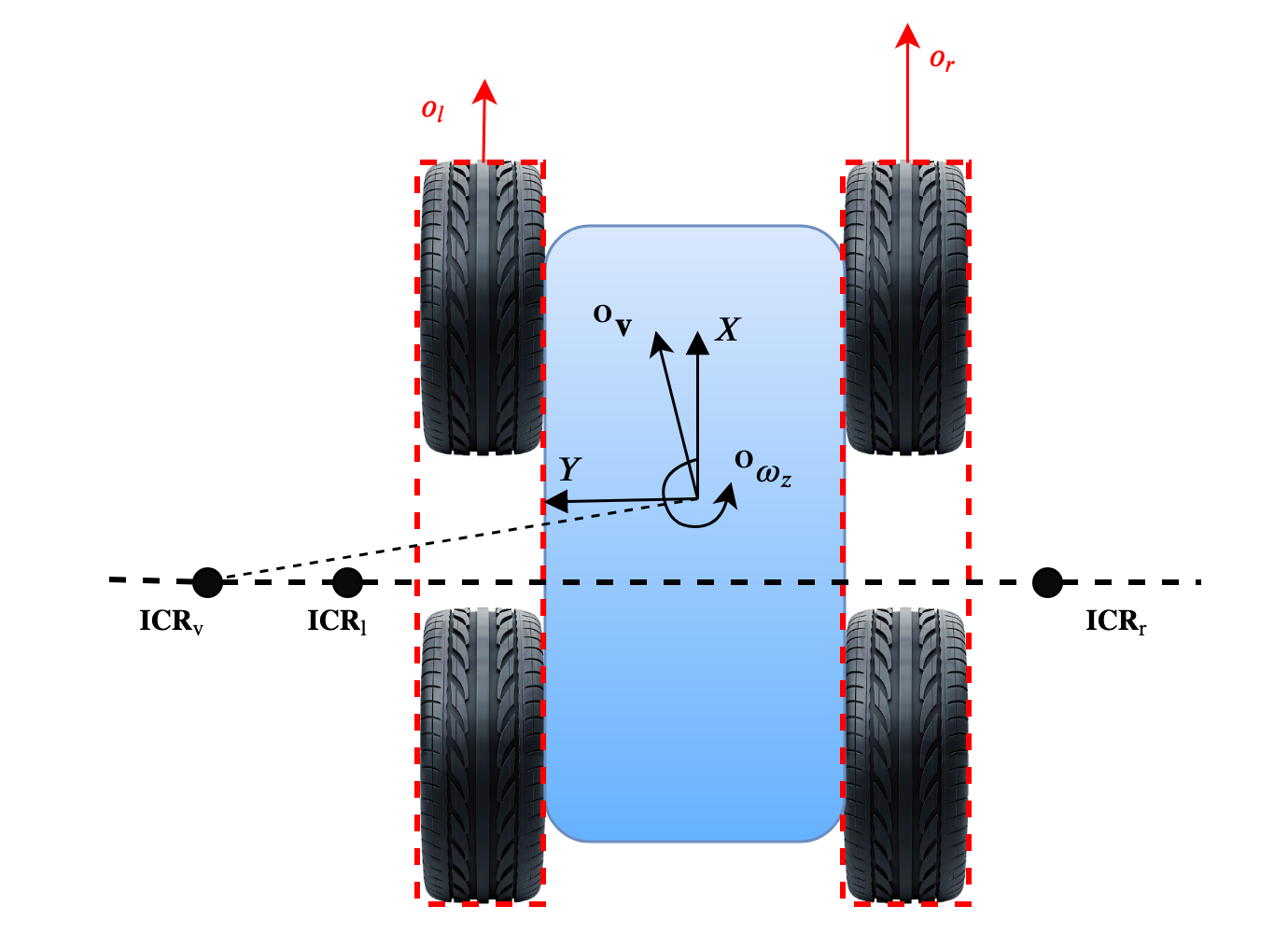}%ICR.pdf 
	} 
	\caption{The skid-steering robotic platform used in our test and the ICR based kinematics model. (a) Clearpath Jackal Platform~\cite{Clearpath}. The equipped sensors include the ones for performing real-time localization (i.e., camera, imu, and wheel odometers), and the ones for providing ground truth poses in experiments (i.e.,RTK-GPS). (b) The odometer measurements and the instantaneous center of rotation (ICR) of a skid-steering robot. $\mathrm{\mathbf{ICR}}_v, \mathrm{\mathbf{ICR}}_l, \mathrm{\mathbf{ICR}}_r$ denote the ICR positions of the robot frame, left wheels and right wheels, respectively. ${}^\mathbf{O}\mathbf{v}$ represents the robot velocity in odometer frame, and ${}^\mathbf{O}\omega_z$ is the angular velocity in the yaw direction.}
	\label{fig:robots_icr}
\end{figure}

Visual-inertial sensors are becoming ubiquitous and many general-purpose visual-inertial navigation algorithms have been developed in recent years (e.g., see~\cite{Li2013high,li2014high,qin2018vins,xing2017photometric,Eckenhoff2019IJRR}),
which has motivated an increasing number of deployments of such sensor suite on real robotic systems~\cite{Hartley2018IROS}.
Due to their low cost and complementary sensing capabilities, we have also employed them in our proposed skid-steering robotic system (see Fig.~\ref{fig:robots_icr}).
Note that, instead of having explicit mechanism of steering control, skid-steering robots rely on adjusting the speed of left and right tracks to turn around. 
The simplicity of the mechanical design and the ability to turn around with zero-radius have made such robots popular in scientific research and development.
%commercial product. 
%Typical usage of skid-steering robots includes planet exploration, underground terrain study, autonomous package delivery, emergency rescuing under dangerous environmental conditions, and so on.
%Typical scientific research usage of skid-steering robot includes space exploration, underground terrain study, and so on.
%On the other hand, skid-steering robots are also widely used in commercial products, e.g., rescue robots in dangerous environmental conditions and low-speed autonomous delivery robots. 
%In this paper, we perform our study relying on an off-the-shelf skid-steering robot~\cite{Clearpath}, as shown in Fig.~\ref{fig:robots}.
%
%
%However, this mechanical simplicity has significantly challenged the algorithmic design (including localization, planning and control).
% this mechanism makes software design of skid-steering robots a challenging problem,
%The major drawback of using skid-steering robots is the complexity of software design, 
%including control, path planing, localization, and mapping. 
%One of the main reasons is that, 
%This is mainly because the readings of wheel encoders of skid-steering robots cannot be directly converted into motion parameters (linear and rotational velocities), and such conversion  depends on factors such as mechanical design, wheel inflation conditions, load and center of mass, and terrain conditions, which motivates the instantaneous centers of rotation (ICR) approximation~\cite{martinez2005approximating}. 

%In recent 15 years, there are a number of research papers focusing on 
Due to the popularity of skid-steering robots,
substantial research efforts have focused on the motion dynamics modeling, control, and planning~\cite{martinez2005approximating,huskic2017path,pentzer2014use}. 
In particular, \citet{yi2009kinematic} introduced a simple dead-reckoning (DR) method for skid-steering robots,
while \citet{wang2018terrain} relied on accurate GPS to provide localization (which clearly is not applicable if GPS is not available or reliable).
As the closest to this work,  \citet{wu2017vins} recently proposed a visual-inertial localization method for wheeled vehicles by directly using an odometer's 2D linear/angular velocity measurements.
%
% In terms of localization, a couple of dead-reckoning approaches have been proposed~\cite{yi2009kinematic}, which are only suitable for short-term trajectory tracking without preventing long-term pose drifts.
% For enabling long-term autonomy, the existing work either relies on high-precision GPS~\cite{wang2018terrain} or uses odometry system as a `black-box' by querying for linear and angular velocities directly~\cite{wu2017vins}. The former approach cannot be well functioning in GPS-denied areas or by using low-cost GPS devices. 
While this approach~\cite{wu2017vins} is perfectly suitable for a standard differential-drive robot, 
significant efforts on kinematic modeling and fusion may be required to deploy it on skid-steering robots;
if blindly ignoring that, localization performance would be  degraded.
% it requires additional complicated low-level sensor fusion modules to make it compatible with skid-steering robots. 
% Otherwise, localization precision will be reduced.

To address these issues and promote visual-inertial localization for skid-steering robots, 
in this paper, we, for the first time, design a tightly-coupled visual-inertial estimation algorithm that fully exploits the robot's ICR-based kinematic~\cite{martinez2005approximating} constraints and efficiently offers 3D localization solutions.
In particular, to compensate for the time-varying ICR model parameters (e.g., due to slippage and terrain roughness),
we explicitly model and estimate online the kinematic parameters of a skid-steering robot.
To this end, leveraging our significant prior work on visual-inertial odometry~\cite{li2014online,Li2013high},
we develop an efficient sliding-window bundle adjustment (BA)-based estimator to optimally fuse measurements from a camera, an IMU, and wheel encoders. 
%
% In this work, to the best of authors' knowledge, we propose the first tightly-coupled visual localization algorithm suitable for skid-steering robots.
% The proposed algorithm uses measurements from a camera, an inertial measurement unit (IMU), and wheel encoders, by explicitly modeling and estimating the motion kinematic parameters of skid-steering robots. 
%Specifically, we model the positions of instantaneous centers of track rotations and two correction factors, to allow accurate conversion from odometer reading to motion estimates.
% This allows accurate estimation of robot poses and kinematic parameters simultaneously.
%Another important contribution of this paper is that We show the effectiveness of the proposed method and the necessity of adding the IMU sensor by detailed observability analysis. 
%The results show that the modeled parameters are {\em not} observable by using readings from only odometers and monocular camera. However, when an IMU is added (or stereo cameras are added), they become {\em fully} observable under general motion. 
%
Moreover, we have performed observability analysis in detail, showing that the kinematic parameters are all {observable} under general motions
while the observability would not hold when the IMU is not used,
which is important for estimator design.

%In the end of this paper, to validate the proposed approach, we have conducted various real-world tests under different terrain conditions, by using an off-the-shelf skid-steering robot~\cite{Clearpath}, shown in Fig.~\ref{fig:robots_icr}.

\section{Related Work}
\label{sec:relatedwork}

% In this section, we group the related work into two main categories: 1) research on skid-steering robots, and 2) localization for wheeled robots.

% As mentioned earlier, there is a rich literature on robot localization and SLAM~\cite{Cadena2016TRO}.
% We by no means intend to provide a complete overview here, instead focusing on localization for skid-steering and wheeled mobile robots.

% \subsection{Wheeled Robots}

As there is rich literature on mobile robot localization~\cite{Cadena2016TRO},
by no means, we intend to provide a comprehensive review on this topic and instead focus on wheeled robots here.
% and we here focus the ones that are based on wheeled robots. 
%
For example, 
\citet{censi2013simultaneous} performed pose estimation with online wheel odometry parameter (the radius of left and right wheels as well as the distance between them) calibration for  a differential drive robot  equipped with two wheels,
while \citet{scaramuzza20111} introduced a camera based localization algorithm for   Ackermann model-based wheeled robots.
As mentioned earlier, 
\citet{wu2017vins} developed a sliding-window EKF to probabilistically fuse the measurements from wheel encoders, an IMU, and a monocular camera to provide 6DOF motion. 
\citet{yap2011particle} solved the similar problem but with a particle filter based method.
% One of the state-of-the-art algorithm in vision based localization for wheeled robot was proposed by Wu et al., in which a sliding-window EKF based estimator was formulated to probabilistically fuse the measurements from wheel encoders, an IMU, and a monocular camera~\cite{wu2017vins}. 
%
However, in all of these methods, it is assumed that linear and angular velocities of a robot can be directly computed from wheel encoder readings, 
which is {\em not} the case for skid-steering robots.

% \subsection{Skid-Steering Robots}

A skid-steering robot often uses the ICR positions of treads to model its motion dynamics~\cite{martinez2005approximating}.
% To cope with the complicated track-to-terrain interaction of skid-steering robots, ~\cite{martinez2005approximating} proposed to 
% use positions of instantaneous centers of rotation (ICRs) of treads to model the motion dynamics of skid-steering robots.
% Specifically,~\cite{martinez2005approximating} experimentally computed ICR parameters, and found those parameters were with
Since it was found empirically that the ICR parameters have small variations under same terrain conditions~\cite{martinez2005approximating},
%Following this concept, in addition to ICRs, researchers introduced 
additional modeling parameters were introduced for better modelling. 
For example, \citet{huskic2017path} used additional scale variables  for allowing accurate path following
and \citet{martinez2017inertia} modeled additional  sliding, eccentricity and steering efficiency.
Note that ICR is not the only model for skid-steering robots and there are many others.
% We also note that, in addition to using ICRs, there are also different algorithms for modeling skid-steering robots. 
For instance, \citet{reina2016slip} modeled  the distance between left and right tread  and integrated it into terrain classification.
\citet{sutoh2018motion}
% to navigate a skid-steering robot on loose terrain,  the authors proposed to 
modeled the ratio of the velocities between left and right wheels as an exponential function of ratio of readings between left and right wheel encoders,
and these exponential parameters are estimated during terrain navigation.

Depending on sensors used and application scenarios, 
different localization algorithms for skid-steering robots have been developed in recent literature.
% In recent one year, there are a couple of papers starting to investigate the localization problem with skid-steering robot. 
% However, the proposed algorithms either can {\em not} be applied to general use cases or lack design details.
In particular,
\citet{lv2019fvo} proposed a method for using images for correcting headings for skid-steering robot, 
while %this method was not probabilistically designed and required 
requiring parallel and perpendicular lines which mainly are suitable for human-made environments.
%
%\citet{sandy2018confusion} designed an hybrid estimator by using measurements from multiple types of sensors on skid-steering robot,
which however did not provide detailed description of how the wheel encoder's measurements were integrated.
%
% In terms of localizing skid-steering robot, which is the main topic of this paper, the well-studied domains include dead-reckoning and GPS-aided localization. 
%On one hand, to perform accurate dead-reckoning, measurements of inertial measurement units (IMUs)
IMU measurements are typically used together with wheel encoder readings to provide motion tracking of skid-sterring robots.
For example, \citet{yi2009kinematic} used an IMU on the skid-steering robot to perform both trajectory tracking and slippery estimation,
and \citet{lv2017indoor} fused measurements from wheel encoders, a gyroscope, and a magnetometer to localize the skid-steering robot. 
GPS measurements, if available, are also leveraged with EKF~\cite{pentzer2014model},
% On the other hand, for integrating GPS measurements with skid-steering robots, 
% \citet{pentzer2014model} proposed to use an extended Kalman filter (EKF). 
% Specifically, \citet{pentzer2014model} compensated for the skid-steering motion model, 
in which the ICR locations were modeled as parts of the state vector and estimated online. 
Specifically, the wheel encoder measurements were used for EKF pose prediction and the GPS measurements were used for EKF update. 
%The authors also further extended their work, to incorporate path planning and control modules on top of the proposed algorithm~\cite{pentzer2014use}.
% By combining GPS and IMU measurements together, more tasks can be performed. 
\citet{wang2018terrain} combined both GPS and IMU measurements, in which they used GPS to perform high-precision navigation and rely on accelerometer measurements 
terrain classification.
In contrast, in this paper, we focus on skid-steering robot localization with low-cost multi-modal sensors while integrating kinematic constraints.

% To summarize, in existing literature, methods of localizing skid-steering robots in GPS-denied areas are missing, which is the main focus of this paper. As two key contributions of this paper, we present detailed algorithm design of 
% simultaneous estimating trajectory and robot kinematic parameters, and provide observability analysis as theoretical guarantee. The details are described in following sections.

%In this paper, to solve this problem, we propose a vision based localization algorithm, which models and online estimates motion dynamic parameters. We also prove the observability properties of these parameters for validating the proposed method. 
%The rest of the paper is organized as follows. In Sec. xxxxx. In Sec xxxx. We xxxx Sec. Finally, in Sec xxx.

%\input{tex/notation}
\section{ICR-based Kinematics of Skid-Steering Robots}
\label{sec:icr-kinmatics}

In this work, we employ the ICR parameters~\cite{martinez2005approximating} to approximately model the kinematics of a skid-steering robot. 
Specifically, as shown in Fig.~\ref{fig:robots_icr}, 
we denote $\mathrm{\mathbf{ICR}}_v = \left( X_v, Y_v \right)$  the ICR position of the robot frame, and 
$\mathrm{\mathbf{ICR}}_l = \left( X_l, Y_l \right)$ and $\mathrm{\mathbf{ICR}}_r = \left( X_r, Y_r \right)$  the ones of the left and right wheels, respectively.
The relation between  the readings of wheel odometer measurements and the ICR parameters can be derived as follows:\footnote{Throughout this paper, 
%we consider a robotic platform navigating with respect to a global reference frame, $\lbrace \mathbf{G} \rbrace$. 
the robot is equipped with a camera, an IMU, and wheel odometers, whose frames are denoted by $\lbrace \mathbf{C} \rbrace$, $\lbrace \mathbf{I} \rbrace$, and $\lbrace \mathbf{O}\rbrace$, respectively,
while $\lbrace \mathbf{G} \rbrace$ refers to the global frame of reference.
 $^{\mathbf A}\mathbf p_{\mathbf B}$ and $^{\mathbf A}_{\mathbf B} \mathbf R$  denote the 3DOF position and rotation of frame $\lbrace\mathbf{B}\rbrace$ with respect to $\lbrace\mathbf{A}\rbrace$.
We use $\hat{\mathbf x}$ and $\delta \mathbf x$ to represent the estimate of random variable $\mathbf x$ and  its error state.
The symbol $\breve{z}$ is used to denote the inferred measurement mean value of $z$.
}
\begin{align}
Y_l & =  - \frac{ o_l  - {}^{\mathbf  O}v_{x}}{{}^{\mathbf  O}\omega_z},\,\,\,Y_r  =  - \frac{ o_r  - {}^{\mathbf  O}v_{x}}{{}^{\mathbf  O}\omega_z} \notag \\
\label{eq:ICR}
Y_v & =  \frac{{}^{\mathbf  O}v_{x}}{{}^{\mathbf  O}\omega_z},\,\,\,
X_v  = X_l = X_r=  - \frac{{}^{\mathbf  O}v_{y}}{{}^{\mathbf  O}\omega_z}
\end{align}
%
% TODO: use a nominal format to denote the scalar
%
where $o_l$ and $o_r$ are linear velocities of left and right wheels, ${}^{\mathbf  O}v_{x}$ and ${}^{\mathbf  O}v_{y}$ are robot's local linear velocity along $x$ and $y$ axes defined in Fig.~\ref{fig:robots_icr}, and ${}^{\mathbf  O}\omega_z$ denotes the local rotational speed.
%where ${}^{\mathbf  O}\mathbf{v}=[{}^{\mathbf  O}v_{x}, {}^{\mathbf  O}v_{y}, {}^{\mathbf  O}v_{z}]^\top$ is the robot's 3D linear velocity in local odometry frame, and ${}^{\mathbf  O}\omega_z$ denotes the angular velocity along the robot's $z$ axis, which is the abbreviation of ${}^{\mathbf  O}\omega_z_{oz}$. 
% %
% \iffalse
% From Eq.~\ref{eq:ICR}, the kinematic models of the skid-steering robot can be expressed as:
% \begin{align}~\label{eq:ICRmat}
% 	\begin{bmatrix} {}^{\mathbf  O}v_{x} \\ {}^{\mathbf  O}v_{y} \\ {}^{\mathbf  O}\omega_z \end{bmatrix} = \frac{1}{\Delta Y} 
% 	\begin{bmatrix}
% 	 - Y_r &  Y_l\\ 
% 	   X_v  & - X_v\\
% 	 -1 & 1
% 	\end{bmatrix}
% 	\begin{bmatrix}
% 	o_l \\ o_r
% 	\end{bmatrix}
% \end{align}
% where ${\Delta Y}  =  Y_l -  Y_r$. We can see that after the linear velocities of left and right wheels $o_l,o_r$ are know, the motion of the robot can be determined by the 3 ICR parameters $ \left[ X_v, Y_l, Y_r \right] $.
% %
% \fi
%
%
Moreover, we introduce two additional scale factors,  $\left[\alpha_l, \alpha_r\right]$, to compensate for the possible effects, e.g., 
due to tire inflation and interface roughness. 
With the scale factors and Eq.~\ref{eq:ICR}, we can express the motion variables as:
%Thus, with the 5 ICR parameters $ \left[ X_v, Y_l, Y_r, \alpha_l, \alpha_r\right] $, the kinematic model can be written as
\begin{align}~\label{eq:ICRmat_5par}
\begin{bmatrix} {}^{\mathbf  O}v_{x} \\ {}^{\mathbf  O}v_{y} \\ {}^{\mathbf  O}\omega_z \end{bmatrix} =
g(\boldsymbol \xi, o_l, o_r) = 
\frac{1}{\Delta Y} 
\begin{bmatrix}
- Y_r &  Y_l\\ 
X_v  & - X_v\\
-1 & 1
\end{bmatrix}
\begin{bmatrix}
\alpha_l &0 \\ 0 &\alpha_r
\end{bmatrix}
\begin{bmatrix}
o_l \\ o_r
\end{bmatrix},\,\,\,\,
\boldsymbol \xi = 
\begin{bmatrix}
X_v \\ 
Y_l \\ Y_r \\ 
\alpha_l \\
\alpha_r
\end{bmatrix}
\end{align}
where ${\Delta Y}  =  Y_l -  Y_r$, and $\boldsymbol \xi$ is the entire set of kinematic parameters.
%
%From the proceeding expressions, we can see that the ICR parameters can be used to model the kinematics of the robot. 
%

Interestingly, as a special configuration when $\boldsymbol \xi = [0, \frac{b}{2}, \frac{-b}{2}, 1, 1]^T$,
with $b$ being the distance between left and right wheels, Eq.~\ref{eq:ICRmat_5par} can be simplified as:
%In most existing method, people utilize the ordinary model for converting the raw odometer measurement  $\mathbf{o}_0, \mathbf{o}_1$ to  $\mathbf{v}_{o0}, \mathbf{v}_{o1}$~\cite{wu2017vins,quan2018tightly}
\begin{align}
\label{eq:ordinarymodel}
	{}^{\mathbf  O}v_{x} = \frac{o_l + o_r}{2}, ~~{}^{\mathbf  O}\omega_z = \frac{o_r - o_l}{b},~~ {}^{\mathbf  O}v_{y} =0
\end{align}
This is the kinematic model for a wheeled robot moving without slippage (e.g., a differential drive robot), 
and used by most existing work for localizing wheeled robots~\cite{wu2017vins,quan2018tightly}. 
However, in the case of skid-steering robots under consideration, if  directly applying Eq.~\ref{eq:ordinarymodel},
the localization accuracy would be significantly degraded (see Section~\ref{sec:exp}).
%where $b$ denotes the vehicle's baseline. The ordinary model is based on the strong hypothesis that there are no slippage of the wheel, which does not hold in most applied scenarios.
%
%
It is important to point out that
%during long-term operations in complicated environments, 
as $\boldsymbol \xi$ cannot remain constant due to different motions and terrains~\cite{martinez2005approximating,huskic2017path},
we will perform online ``calibration'' to estimate these kinematic parameters along with the navigation states as in~\cite{li2014online,censi2013simultaneous,Li2013high} (see Section~\ref{sec:icr_pred}).  
% to simultaneously estimate $\boldsymbol \xi$ and other localization variables of interests over time 
% To make this possible, we introduce our prediction functions for robot's poses and the corresponding uncertainties, with $\boldsymbol \xi$ involved, in Sec.~\ref{sec:icr_pred}.
%Since ICR parameters is affected by many factors and is easily changed during the navigation, we put these 5 ICR parameters into the state vector and estimate them online. 

%\section{System Overview and Preliminaries}
\section{Kinematics-Constrained Visual-Inertial Localization}
\label{sec:estimator}

%Leveraging our prior work on sliding-window optimization-based visual-inertial navigation~\cite{Eckenhoff2019IJRR}, 
We develop a window-BA estimator for the proposed kinematics-constrained visual-inertial localization for a skid-steering robot equipped with a camera, an IMU, and wheel encoders.
% In this section, we illustrate the framework of our system. 
% The proposed localization algorithm leverages multiple types of measurements from a camera, an IMU, and wheel odometer. 
For simplicity, although not necessary, we assume known extrinsic transformations between sensors.
%
% For computational savings, only the states inside a sliding window will be estimated, and the old states will be marginalized. 
%%
At each time step, we optimize the following window of states, 
whose typically oldest state will be marginalized out when moving to the next window in order to bound computational cost:
\begin{align}
	\mathbf{x} = \lbrace {}^{\mathbf{G}}_{\mathbf{O}}\mathcal{T}, {}^{\mathbf{G}}\mathbf{v}_{\mathbf{I}_k}, \mathbf{b}_a, \mathbf{b}_\omega, \boldsymbol{\xi}, \mathcal{F}, \mathbf{m} \rbrace
\end{align}
In the above expression, 
${}^{\mathbf{G}}_{\mathbf{O}}\mathcal{T} = \lbrace {}^{\mathbf{G}}_{\mathbf{O}_{k-s}}\mathbf{T}, \dots, {}^{\mathbf{G}}_{\mathbf{O}_{k-1}}\mathbf{T}, {}^{\mathbf{G}}_{\mathbf{O}_{k}}\mathbf{T}\rbrace$ denotes the cloned poses in the sliding window at time $\lbrace k-s, \dots, k\rbrace$. 
${}^{\mathbf{G}}_{\mathbf{O}_{k}}\mathbf{T} = \left\{ {}^{\mathbf{G}}_{\mathbf{O}_{k}}\mathbf{R}, {}^{\mathbf{G}}\mathbf{p}_{\mathbf{O}_k} \right\}$ represents the 6DOF pose of the robot at time $k$. 
We choose the odometry frame is the base sensor frame and the system is initialized by the initial position of odometer while the direction of $z$ is aligned with the gravity. 
$\mathcal{F}$ contains all the 3D global positions of visual features. 
$ {}^{\mathbf{G}}\mathbf{v}_{\mathbf{I}_k}, \mathbf{b}_a, \mathbf{b}_\omega$ are the IMU velocity in global frame, acceleration bias and angular velocity bias, respectively. 
Note that we estimate online the ICR kinematic parameters $\boldsymbol{\xi}$ and thus include them in the state as well.
Lastly, $\mathbf{m}$ denotes the parameters related to the motion manifold constraints enforcing local smooth ground planar  motion.
As illustrated in Fig.~\ref{fig:factorgraph}, the sliding window BA is our estimation engine whose cost function includes the following constraints:
\begin{align} \label{eq:cost}
	\mathcal{C} = \mathcal{C}_{prior} + \mathcal{C}_{proj} + \mathcal{C}_{I} + \mathcal{C}_{odom} +\mathcal{C}_{manifold} 
\end{align}
% where $ \mathcal{C}_{prior}, \mathcal{C}_{proj}, \mathcal{C}_{IMU}, \mathcal{C}_{odom}, \mathcal{C}_{manifold}$ denote
which includes 
the prior  of the states remaining in the current sliding window after marginalization~\cite{Eckenhoff2019IJRR},
the projection error of visual features, 
the IMU integration constraints~\cite{Eckenhoff2019IJRR,Li2013high}, 
the odometer-induced kinematic constraints, 
and the motion manifold constraints.
%

% In particular, 
% the prior factor related with the remaining states %$\mathbf{x}_r$ 
% in current sliding window is associated with the measurements incident to the marginalized states 
% 
% \begin{align}
% 	\mathcal{C}_{prior}(\mathbf{x}_r) = \frac{1}{2}\left| \left|\mathbf{x}_r\boxminus \breve{\mathbf{x}}_r \right| \right|_{\bm \Lambda_{marg}}^2+ \mathbf{g}_{marg}^{\top}\left(\mathbf{x}_r\boxminus \breve{\mathbf{x}}_r\right) 
% \end{align}
% %%
% where $\breve{\mathbf{x}}_r$ is the current state estimate at the time of marginalization, and ${\bm \Lambda}_{marg}$ and ${\mathbf{g}}_{marg}$ are the marginalized Hessian and gradient~\cite{Eckenhoff2019IJRR}, respectively. The “boxminus” $\boxminus$ denotes the generalized minus operation, since we need perform computations operate on manifold.
%

 \begin{figure} %[htb]
	\centering
	\includegraphics[width = 0.8\textwidth]{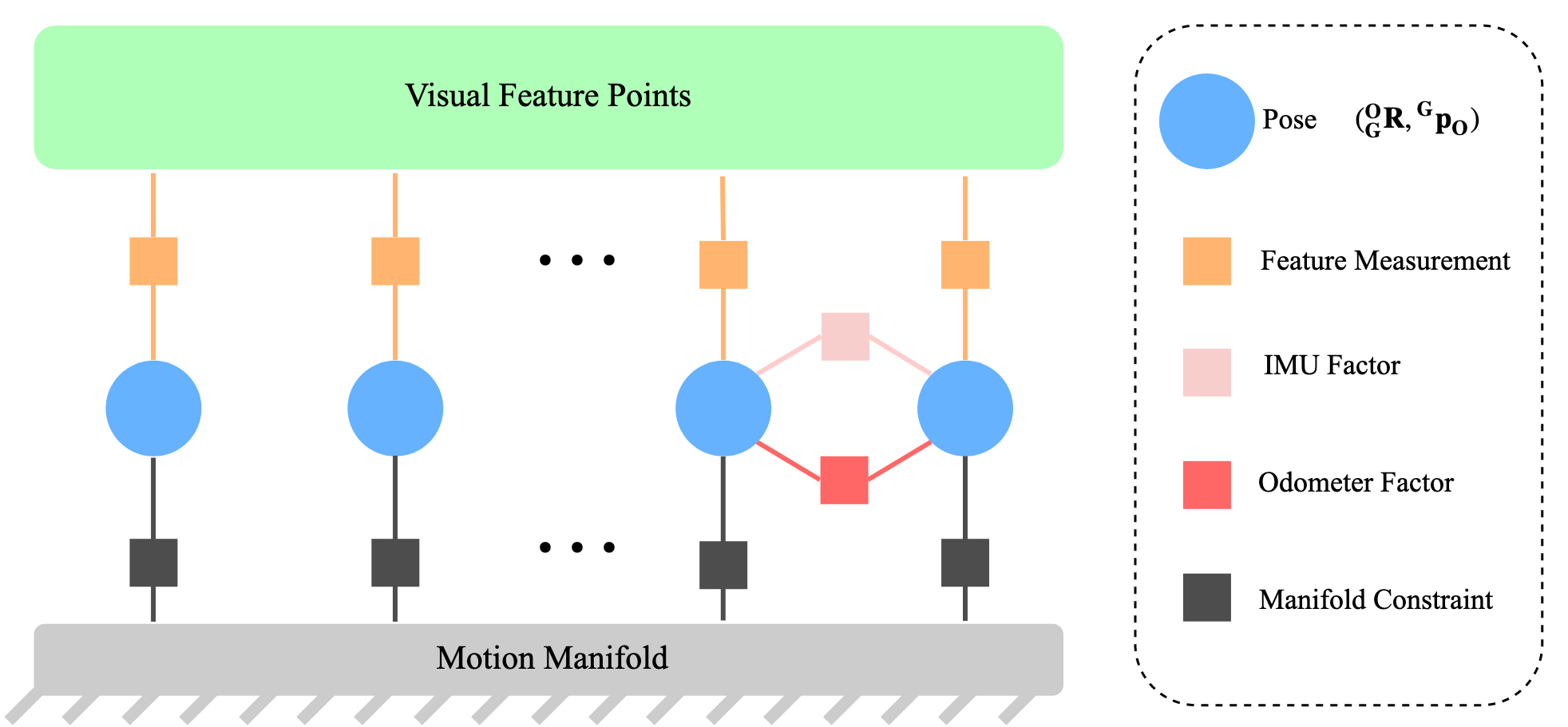}%icr_factor_graph.pdf
	\caption{In the proposed kinematics-constrained visual-inertial localization for skid-steering robots, five different constraints are used in the sliding-window BA: A prior encapsulates the information about the current states due to marginalization of states and measurements (Prior factors are related to all states related to marginalized measurements, we omit to plot them in this figure for clarity);
	Visual feature measurements connect the feature points in the map and the robot pose at the time when the image was recorded; 
	IMU preintegration factors summarize the sequential IMU raw measurements between the two images; 
	Odometr-induced kinematic factor summaries the sequential odometer measurements between the two images. 
	%Motion manifold constraints enforce local smooth planar motions.
	} 
	\label{fig:factorgraph} 
\end{figure}

\subsection{Visual-Inertial Constraints}
\label{sec:vi-constraint}

In the sliding-window BA, only keyframes are optimized, which are selected based on a simple heuristic: the odometer prediction has a translation or rotation over a certain threshold (e.g., 0.2 meter and 3 degrees as in our experiments).
% For the captured images, Keyframe policy is used in our framework for computational saving. 
% Only some keyframes will be optimized in the back-end. 
In contrast, for computational savings, non-keyframes will be discarded,
%immediately without any extra operations, 
unlike existing methods~\cite{qin2018vins,leutenegger2015keyframe} which extract features firstly and analyses the distribution of the features for keyframe selection. 
% Once a new image is coming, we use the odometer pose prediction for keyframe selections.
%
% We use a simple heuristic for keyframe selection: the odometer prediction has a translation or rotation over a certain threshold 
% (e.g., 0.2 meter and 3 degrees as in our experiments).
%
Among keyframes in the window, 
corner feature points are extracted~\cite{rosten2006machine}   and tacked by KLT optical flow algorithm~\cite{lucas1981iterative}.  
The standard reprojection errors of the tracked features comprise the visual cost $\mathcal{C}_{proj}$ in \eqref{eq:cost} as in \cite{Eckenhoff2019IJRR,Li2013high}.
%

%%%%%
% The raw measurements of IMU is the acceleration and angular velocity measurements in local IMU frame.
% The integration of IMU measurements are omitted here due to limited spaces, and please refer to~\cite{Li2013high} for details. 
On the other hand, 
the IMU measurements between any two consecutive keyframes are integrated and form the inertial constraints across the sliding window~\cite{Eckenhoff2019IJRR,Li2013high}:
\begin{align}
	\mathcal{C}_{I} = \big{|}\big{|} {\mathbf{x}}_{I_k} \boxminus f \left(  {\mathbf{x}}_{I_{k-1}}, \mathcal{I}_{am}, \mathcal{I}_{\omega m} \right) \big{|}\big{|}_{\bm \Lambda_{I_k}}^2
\end{align}
%
% % We can write the IMU constraints here:
% \begin{align}
% 	\mathcal{C}_{IMU}({}^{\mathbf G}{\mathbf{p}}_{\mathbf{O}_k}, {}^{\mathbf G}_{\mathbf{O}_k}{ \mathbf{R}}, {}^{\mathbf G}{\mathbf{p}}_{\mathbf{O}_{k-1}}, {}^{\mathbf G}_{\mathbf{O}_{k-1}}{ \mathbf{R}}, {}^{\mathbf G}\mathbf{v}_{\mathbf{I}_k}, \mathbf{b}_a, \mathbf{b}_\omega ) = \big{|}\big{|} \hat{\mathbf{x}}_{IMU} \boxminus \mathbf{x}_{IMU} \big{|}\big{|}_{\bm \Lambda_{IMU}}^2
% \end{align}
% %
% After integrating all the intermediate IMU measurements, we obtain the state prediction 
where 
${\mathbf{x}}_{Ik}$ 
is the IMU state at time $t_{k}$, and $\mathcal{I}_{am}, \mathcal{I}_{\omega m}$ denote the IMU acceleration and angular velocity measurements between $t_{k-1}$ and $t_k$, respectively. ${\bm \Lambda_{I_k}}$ represents the inverse covariance (information) of the IMU prediction $f \left(  {\mathbf{x}}_{I_{k-1}}, \mathcal{I}_{am}, \mathcal{I}_{\omega m} \right)$.

\subsection{ICR-based Kinematic Constraints}
\label{sec:icr_pred}

We now derive the ICR-based kinematic constraints based on the wheel encoders' measurements of the skid-steering robot.
Specifically, 
by assuming the supporting manifold of the robot is locally planar between $t_k$ and $t_{k+1}$, 
the local linear and angular velocities, ${}^{\mathbf O{(t)}}{\mathbf{v}}$ and ${^{\mathbf O(t)} \boldsymbol \omega}$, 
are a function of the wheel encoders' measurements of the left and right wheels $o_{lm}(t)$ and $o_{rm}(t)$ 
as well as the ICR kinematic parameters $\boldsymbol \xi$ [see \eqref{eq:ICRmat_5par}]:
\begin{align}
 \left[ {}^{\mathbf O_{(t)}}{\mathbf{v}}^T , {^{\mathbf O(t)} \boldsymbol \omega}^T \right]^\top &= \boldsymbol{\Pi} \, g(\boldsymbol{\xi}(t), o_l(t), o_r(t) ) \notag \\
 &= \boldsymbol{\Pi} \,
 g(\boldsymbol{\xi}(t), o_{lm}(t) - n_l(t), o_{rm}(t) - n_r(t) )
\end{align}
% \iffalse
% Specifically,
%  \begin{align}
%     {}^{\mathbf O_{(t)}}{\mathbf{v}} &= 
%     \begin{bmatrix}
%     g_x(\boldsymbol \xi(t), o_l(t), o_r(t)) \\
%     g_y(\boldsymbol \xi(t), o_l(t), o_r(t)) \\
%     0
%     \end{bmatrix},\,\,\,
%     {^{\mathbf O(t)} \boldsymbol \omega} =
%     \begin{bmatrix}
%     0 \\
%     0 \\
%     g_z(\boldsymbol \xi(t), o_l(t), o_r(t)) \\
%     \end{bmatrix}
% \end{align}
% \fi
where $\boldsymbol{\Pi} = 
    \begin{bmatrix}
    \mathbf e_1^T &
    \mathbf e_2^T &
    \mathbf 0 &
    \mathbf 0 &
    \mathbf 0 &
    \mathbf e_3^T
    \end{bmatrix}^T$ is the selection matrix 
    with    $\mathbf e_i$ being  a $3\times1$ unit vector with the $i$th element of 1,
%  $o_{lm}(t)$ and $o_{rm}(t)$ are odometer readings of left and right wheels, respectively, 
$n_l(t)$ and $n_r(t)$ are the odometry noise  modeled as zero-mean white Gaussian.
Once the instantaneous local velocities of the robot are available, 
with the initial conditions ${}^{\mathbf O_{k-1}}_{\mathbf O(t)}\mathbf{R} \big{|}_{t = t_{k-1}} = \mathbf I_{3\times3}$ and ${}^{\mathbf O_{k-1}}{\mathbf{p}}_{\mathbf O(t)} \big{|}_{t = t_{k-1}} = \mathbf 0_{3\times1}$,
we can integrate the following differential equations in the time interval $t\in [t_{k-1}, t_{k}]$:
\begin{align}
    {}^{\mathbf O_{k-1}}_{\mathbf O(t)}\dot{\mathbf{R}} &=
    {}^{\mathbf O_{k-1}}_{\mathbf O(t)}\mathbf{R} \cdot 
    \lfloor {^{\mathbf O(t)} \boldsymbol \omega} \rfloor \notag \\
    	\label{eq:propagated_raw2}
    {}^{\mathbf O_{k-1}}\dot{\mathbf{p}}_{\mathbf O(t)} &= 
    {}^{\mathbf O_{k-1}}{\mathbf{v}}_{\mathbf O(t)} = 
    {}^{\mathbf O_{k-1}}_{\mathbf O(t)}\mathbf{R} \cdot
    {}^{\mathbf O_{(t)}}{\mathbf{v}} 
\end{align}
This integration will result in the relative pose $\{{}^{\mathbf O_{k-1}}\mathbf{p}_{\mathbf O_{k}}, {}^{\mathbf O_{k-1}}_{\mathbf O_{k}}\mathbf{R}\}$,
which is then used to propagate the global pose from $t_{k-1}$ to $t_{k}$:
\begin{subequations}
\label{eq:propagated_raw}
\begin{align}
  	{}^{\mathbf G}\mathbf{p}_{\mathbf O_{k}} &= {}^{\mathbf G}\mathbf{p}_{\mathbf O_{k-1}} + {}^{\mathbf G}_{\mathbf O_{k-1}}\mathbf{R} \cdot {}^{\mathbf O_{k-1}}\mathbf{p}_{\mathbf O_{k}} \\
  	{}^{\mathbf G}_{\mathbf O_{k}}\mathbf{R} &= {}^{\mathbf G}_{\mathbf O_{k-1}}\mathbf{R} \cdot {}^{\mathbf O_{k-1}}_{\mathbf O_{k}}\mathbf{R}
\end{align}
\end{subequations}
%
% where ${}^{\mathbf O_{k-1}}\mathbf{p}_{\mathbf O_{k}}$ and ${}^{\mathbf O_{k-1}}_{\mathbf O_{k}}\mathbf{R}$ represents the relative motion between timestamps.
% 
% 
% 
% Given an estimate of robot pose at time $t_{k}$, the major task of pose prediction is to compute the pose estimate for the next timestamp $t_{k+1}$ as well as the corresponding Jacobian matrices using sensory measurements. 
%By defining the robot pose $\mathbf x_o$:
%\begin{align}
% \mathbf x_o = 
% \{
% {}^{\mathbf G}\mathbf{p}_{\mathbf O},
% {}^{\mathbf G}_{\mathbf O}\mathbf{R},
% \boldsymbol \xi
% \}
%\end{align}
%
Additionally,
we model the ICR kinematic parameter $\boldsymbol \xi$ as a random walk to capture its time-varying characteristics:
% changes along with both environmental conditions and robot's mechanical conditions. Thus, we also model $\boldsymbol \xi$ as time-varying:
\begin{align}
\label{eq:ICR Ki}
    \dot{\boldsymbol{\xi}}(t) = \mathbf n_{\boldsymbol \xi}(t)
\end{align}
where $\mathbf n_{\boldsymbol \xi}$ is zero-mean white Gaussian noise.

Based on the ICR-based kinematic model~\eqref{eq:propagated_raw} and \eqref{eq:ICR Ki}, 
we predict the pose and kinematic parameter at the newest keyframe time $t_k$,
$\hat{\mathbf{x}}_{odom_{k}} = \left[ {}^{\mathbf G}_{\mathbf{O}_k}\hat{ \mathbf{R}}, {}^{\mathbf{G}}\hat{\mathbf{p}}_{\mathbf{O}_k},  \hat{ \boldsymbol{\xi} } \right] = f \left(  {\mathbf{x}}_{{odom}_{k-1}}, \mathcal{O}_{lm}, \mathcal{O}_{rm} \right) $, 
by integrating all the intermediate odometery measurements $\mathcal{O}_{lm}, \mathcal{O}_{rm}$.
As a result, the odometer-induced kinematic constraint can be generically written in the following form:
\iffalse
\begin{align}
	\mathcal{C}_{odom}({}^{\mathbf{G}}{\mathbf{p}}_{\mathbf{O}_k}, {}^{\mathbf G}_{\mathbf{O}_k}{ \mathbf{R}}, {}^{\mathbf{G}}{\mathbf{p}}_{\mathbf{O}_{k-1}}, {}^{\mathbf G}_{\mathbf{O}_{k-1}}{ \mathbf{R}}, \boldsymbol{\xi} ) = \big{|}\big{|} \mathbf{x}_{odom} \boxminus \hat{\mathbf{x}}_{odom_k} \big{|}\big{|}_{\bm \Lambda_{odom}}^2
\end{align}
\fi
\begin{align}
	\mathcal{C}_{odom} = \big{|}\big{|} {\mathbf{x}}_{{odom}_k} \boxminus f \left(  {\mathbf{x}}_{{odom}_{k-1}},\mathcal{O}_{lm}, \mathcal{O}_{rm} \right) \big{|}\big{|}_{\bm \Lambda_{odom_k}}^2
\end{align}
where ${\bm \Lambda_{odom_k}} $ represents the inverse covariance (information)  obtained via  covariance propagation.
Specifically, the discrete-time linearized kinematic model of the error state at $t_i$,
$\delta \mathbf x_{i} = [{\delta \mathbf p_{i}}, {\delta \boldsymbol \theta_{i}}, {\delta \boldsymbol \xi}_{i}]$,
corresponding to \eqref{eq:propagated_raw} and \eqref{eq:ICR Ki} at time $t_i$ can be found as follows: 
% by denoting robot pose including kinematic parameters at time $t_k$, 
% $\mathbf x_k = \{^{\mathbf G} {\mathbf p} _{\mathbf O_k}, ^{\mathbf G} _{\mathbf O_k}{\mathbf R}, {\boldsymbol \xi}_k\}$, the predicted pose $\hat{\mathbf x}_{k+1}$ can be computed by numerically integrating Eq.~\ref{eq:propagated_raw} - Eq.~\ref{eq:ICR Ki}. 
% During the numerical integration, $n_{rm}, n_{lm}$ and $\mathbf n_{\boldsymbol \xi}$ are all zeros, identical to their expected values. 
% On the other hand,  for allowing predicting uncertainties, error-state equations are also needed:
\begin{align}
    \delta \mathbf x_{i} =
    \boldsymbol \Phi_{i-1}
    \delta \mathbf x_{i-1} + \mathbf{G}_{i-1}
    \mathbf n_{i-1}
\end{align}
%
% The position and kinematic parameter errors ${\delta \mathbf p}$ and ${\delta \boldsymbol \xi}$ are defined to be additive, and the orientation error terms ${\delta \boldsymbol \theta}$ are defined following~\cite{li2014online}. 
where $\mathbf n_{i-1} = \left[n_l(t_{i-1}), n_r(t_{i-1}), n_{\boldsymbol \xi}(t_{i-1})\right]^\top \sim \mathcal{N}(\mathbf 0, \mathbf Q_{i-1})$,
and $\boldsymbol \Phi_{i-1}$ is the error-state transition  matrix which is given by:
\begin{align}
	\bm{\Phi}_{i-1} = \begin{bmatrix}
	\mathbf{I}_{3 \times 3} & -{}^{\mathbf G}_{\mathbf O_{i-1}}\hat{\mathbf{R}} \lfloor {}^{\mathbf O_{i-1}}\hat{\mathbf{p}}_{\mathbf O_{i}}\rfloor & \bm{\Phi}_A  &  \bm{\Phi}_B \\ 
	\mathbf{0}_{3 \times 3}& {}^{\mathbf O_{i-1}}_{\mathbf O_{i}}\hat{\mathbf{R}}^\top &\bm{\Phi}_C &\bm{\Phi}_D\\
	\mathbf{0}_{3 \times 3}& 	\mathbf{0}_{3 \times 3}& 	\mathbf{I}_{3 \times 3}& 	\mathbf{0}_{3 \times 2}	\\
	\mathbf{0}_{2 \times 3}& 	\mathbf{0}_{2 \times 3}& 	\mathbf{0}_{2 \times 3}& 	\mathbf{I}_{2 \times 2}	\\
	\end{bmatrix}
\end{align}
where $\boldsymbol{\Phi}_A, \boldsymbol{\Phi}_B, \boldsymbol{\Phi}_C, \boldsymbol{\Phi}_D$ are non-zero blocks, corresponding to positional and rotational elements with respect to $\rm\mathbf{ICR} =\begin{bmatrix} X_v, Y_l, Y_r \end{bmatrix}^\top$ and scale factor $\begin{bmatrix} \alpha_l & \alpha_r \end{bmatrix}^\top$,
and $\mathbf{G}_{i-1}$ is the noise Jacobian matrix.
%
%
%Since the structures of $\bm{\Phi}_A, \bm{\Phi}_B, \bm{\Phi}_C, \bm{\Phi}_D$ are complicated and they need to be computed numerically, we here ignore the details for brevity.
% The details about  $\bm{\Phi}_A, \bm{\Phi}_B, \bm{\Phi}_C, \bm{\Phi}_D$ is omitted here for brevity, please refer to~\cite{tr_icr}. 
%
Due to space limitations, the detailed derivations of these matrices can be found in our companion technical report~\cite{tr_icr}.

%%%%%%%%%%%%%%%%%%%%%%%% Mainforld constraints

Additionally, 
% The manifold constraints includes two parts related to the position and rotation of the robot respectively. Firstly, 
as the skid-steer robot navigates on ground surface, its positions within a short period of time should be well modeled by a quadratic polynomial~\cite{zhang2019large}:
% (in our implementation, this holds in the whole sliding window):
\begin{align}~\label{eq:manifold}
	M_p({}^{\mathbf G}\mathbf{p}_{\mathbf O}) &=  \big{|} \big{|} \frac{1}{2} \begin{bmatrix}
	 {}^{\mathbf G} p_{\mathbf{O}x} \\  {}^{\mathbf G} p_{\mathbf{O}y}
	\end{bmatrix}^\top \mathbf{A} \begin{bmatrix}
	{}^{\mathbf G} p_{\mathbf{O}x} \\  {}^{\mathbf G} p_{\mathbf{O}y}
	\end{bmatrix} + \mathbf{B}^\top \begin{bmatrix}
	{}^{\mathbf G} p_{\mathbf{O}x} \\  {}^{\mathbf G} p_{\mathbf{O}y}
	\end{bmatrix} + {}^{\mathbf G}\mathbf{p}_{\mathbf{O}z} + c \big{|} \big{|}_{\bm \Lambda_{mp}} \\
	{\rm with}~~ \mathbf{A} &= \begin{bmatrix}
	a_1 & a_2\\ a_2 & a_3
	\end{bmatrix} ,~~~ \mathbf{B} = \begin{bmatrix}
	b_1 \\ b_2
	\end{bmatrix}
\end{align}
where $\mathbf{m} = \left[  a_1, a_2, a_3, b_1, b_2, c \right]^\top$ are the manifold parameters. 
% And all the robot position insides the sliding window should satisfy Eq.~\ref{eq:manifold} .
%
Note also that the roll and pitch of the ground robot should be consistent with the normal of the motion manifold (ground surface), 
which can be expressed as follows:
% The robot rotations  ${}^{\mathbf G}\mathbf{R}_{o}$ should satisfy the following constraint:
 \begin{align}
 	M_r({}^{\mathbf G}_\mathbf{O}\mathbf{R}, {}^{\mathbf G} \mathbf{p}_\mathbf{O} ) = \big{|} \big{|} \lfloor {}^{\mathbf G}_\mathbf{O}\mathbf{R} \mathbf{e}_3  \rfloor_{12}* \frac{\partial M_p }{\partial {}^{\mathbf G} \mathbf{p}_\mathbf{O} } \big{|} \big{|}_{\bm \Lambda_{mr}}^2
 \end{align}
%with $\bm \Lambda_{mr}$ denotes the information of this constraints, which is determined by experience~\cite{bibid}.
where $\lfloor \mathbf{v} \rfloor_{12}$ denotes the first and second rows of the symmetric matrix of the 3D vector $\mathbf{v}$. 
At this point, the motion manifold constraint for  all the poses in the current sliding window $i \in \lbrace k-s, \dots, k-1, k\rbrace$ can be written as:
\begin{align}
\mathcal{C}_{manifold}({}^{\mathbf G}_{\mathbf{O}_i}{ \mathbf{R}}, {}^{\mathbf G}{\mathbf{p}}_{\mathbf{O}_i},  \mathbf{m} ) = M_p({}^{\mathbf G}\mathbf{p}_{\mathbf{O}_i}) + M_r({}^{\mathbf G}_{\mathbf{O}_i}\mathbf{R}, {}^{\mathbf G} \mathbf{p}_{\mathbf{O}_i} )
\end{align}

\newcommand{\bO}{\mathbf{O}}
\newcommand{\bL}{\mathbf{L}}
\newcommand{\bT}{\mathbf{T}}
\newcommand{\bp}{\mathbf{p}}
\newcommand{\bl}{\mathbf{l}}
\newcommand{\ba}{\mathbf{a}}
\newcommand{\bomega}{\boldsymbol{\omega}}
\newcommand{\bo}{\mathbf{o}}
\newcommand{\bc}{\mathbf{c}}
\newcommand{\bz}{\mathbf{z}}
\newcommand{\bt}{\mathbf{t}}
\newcommand{\bQ}{\mathbf{Q}}
\newcommand{\bC}{\mathbf{C}}
\newcommand{\bA}{\mathbf{A}}
\newcommand{\bB}{\mathbf{B}}
\newcommand{\bG}{\mathbf{G}}
\newcommand{\bn}{\mathbf{n}}
\newcommand{\bI}{\mathbf{I}}
\newcommand{\bR}{\mathbf{R}}
\newcommand{\bD}{\mathbf{D}}
\newcommand{\bv}{\mathbf{v}}
\newcommand{\bJ}{\mathbf{J}}
\newcommand{\be}{\mathbf{e}}
\newcommand{\btheta}{\boldsymbol{\theta}}
\newcommand{\bepsilon}{\boldsymbol{\epsilon}}
\newcommand{\bgamma}{\boldsymbol{\gamma}}
\newcommand{\boldeta}{\boldsymbol{\eta}}

\section{Observability Analysis}

An important prerequisite condition for the proposed localization algorithm to work properly is that the skid-steering kinematic parameter vector, $\boldsymbol \xi$, is locally observable (or identifiable\footnote{Since derivative of $\boldsymbol \xi$ is modeled by zero-mean Gaussian, we here use observability and identifiability interchangeably.})~\cite{Bar-Shalom1988}. Therefore, in this section, we provide detailed observability analysis. We note that, it is also interesting to investigate the observability properties by applying the proposed method with monocular camera and odometer only (without having IMU). This will examine whether skid-steering robots can be localized with reduced number of sensors, and emphasize the importance of our choice of adding the IMU.

%We have elaborated the terrain adaptive visual localization method in preceding sections. Specifically, the ICR model is used to formulate the kinematics of the skid-steering robot and will be estimated online for better adaptability to the terrain variations. In this section, assuming the extrinsic transformation between different sensors are perfectly known, we will analyze whether these kinematic parameters $\boldsymbol{\xi}$ are observable (equivalently, identifiable\footnote{kinematic parameters remain constants in a short period of time during the navigation, so we can use the terms identifiable and observable interchangeably.}) in two different setups: monocular measurement with 5 ICR parameters, inertial aided monocular measurement (VIO) or stereo measurement with 5 ICR parameter. 
%
\subsection{
Observability of $\boldsymbol \xi$ with a monocular camera and odometer}
\label{sec:mono_ICR_5}
To conduct our analysis, we follow the idea of~\cite{li2014online}, in which information provided by each sensor is firstly investigated and subsequently combined together for deriving the final results. By doing this, 
`abstract' measurements instead of the `raw' measurements are used for analysis, which greatly simplifies our derivation.
A moving monocular camera is able to 
provide information on rotation and up-to-scale position with respect to the initial camera frame~\cite{hartley2003multiple}~\cite{li2014online}. Equivalently, we can say that a moving camera is able to provide the following two types of measurements: (\romannumeral1) camera's angular velocity and (\romannumeral2) its up-to-scale linear velocity:
\begin{subequations}
    \label{eq:camera_measurement}
    \begin{align}
    {}^{\mathbf{C}_{(t)}}{\bomega}_m = {}^{\mathbf{C}_{(t)}}\bomega + \mathbf{n}_{\omega} (t)\\
    {}^{\mathbf{C}_{(t)}} {\bv}_m = s^{-1}\cdot{}^{\mathbf{C}_{(t)}}\bv + \mathbf{n}_v (t)
    \end{align}
\end{subequations}
where $\mathbf{n}_{\omega} (t)$ and $\mathbf{n}_{v} (t)$ are the white noises, and ${}^{\mathbf{C}_{(t)}}{\bomega}$ and ${}^{\mathbf{C}_{(t)}}{\bv}$ are true angular and linear velocities of camera with respect to global frame expressed in camera frame respectively. Finally, $s$ is an unknown scale factor.
We also note that, since the camera to odometer extrinsic parameters are precisely known in advance, the camera measurements can be further denoted as:
\begin{align}
    \label{eq:camera_measurement1}
\breve{{\bomega}}(t) = {}^{\mathbf O} _{\mathbf C} \mathbf R {}^{\mathbf{C}_{(t)}}{\bomega}_m,\,\,
\breve{{\bv}}(t) = {}^{\mathbf O} _{\mathbf C} \mathbf R {}^{\mathbf{C}_{(t)}} {\bv}_m
\end{align}
We will later show that this will simplify the analysis.

On the other hand, as mentioned in Sec.~\ref{sec:icr-kinmatics}, odometer provides observations for the speed of left and right wheels, i.e., $o_{l}$ and $o_{r}$ respectively. By linking $o_{l}$,  $o_{r}$, $\breve{{\bomega}}(t) $, $\breve{{\bv}}(t)$, and kinematic parameter vector $\boldsymbol \xi$ together, the observability properties can be analyzed in details. 
We also note that, during the observability analysis, the zero-mean noise terms are ignored, since they will not change our conclusions.

%Inspired by~\cite{li2014online,kelly2011visual}, we use the above 'abstract' measurements instead of the 'raw' visual feature measurements for analysis. As the measurements presented in Eq.~\ref{eq:camera_measurement} have encoded all the information that the monocular camera can provide for motion estimation, we can use these 'abstract' measurements for analysis, which leads to a simplified and efficient analysis procedure.
By ignoring the noise terms, the following equation holds:
\begin{align}
	\label{eq:linear_velocity_relation}
	^{\mathbf{O}}\bv  &= 	-\lfloor{ { \breve\bomega}} \rfloor
	^{\mathbf O} \bp_{\mathbf C}+  s \cdot \breve{\bv} 
\end{align}
where $^{\mathbf O} \bp_{\mathbf C}$ is the known extrinsic parameter and $^{\mathbf{O}}\bv$ is velocity of the odometer frame with respect to the global frame expressed in the odometer frame.
Substituting Eq.~\ref{eq:camera_measurement1} and~\ref{eq:linear_velocity_relation} into 
Eq.~\ref{eq:ICRmat_5par} leads to :
\begin{align}
\begin{bmatrix}
\begin{bmatrix}
\breve{\omega} {}^{\mathbf O} y_{\mathbf C} \\
-\breve{\omega} {}^{\mathbf O} x_{\mathbf C} 
\end{bmatrix} + 
s \begin{bmatrix}
\breve{v} _{x} \\
\breve{v} _{y}
\end{bmatrix} \\ 
\breve{\omega}
\end{bmatrix} &= \frac{1}{\Delta Y} 
	\begin{bmatrix}
	- Y_r &  Y_l\\ 
	X_v  & - X_v\\
	-1 & 1
	\end{bmatrix}
	\begin{bmatrix}
	\alpha_l &0 \\ 0 &\alpha_r
	\end{bmatrix}
	\begin{bmatrix}
	o_l \\ o_r
	\end{bmatrix} \notag \\
	&= \begin{bmatrix}
	\breve{\omega} Y_l \\
	- \breve{\omega} X_v \\
	\frac{1}{\Delta Y}
	\begin{bmatrix}
	-1 & 1
	\end{bmatrix}
	\begin{bmatrix}
	\alpha_l & 0 \\
	0 & \alpha_r
	\end{bmatrix}
	\begin{bmatrix}
	o_l \\ o_r 
	\end{bmatrix} 
	\end{bmatrix}
	+
	\begin{bmatrix}
	\alpha_l o_l \\ 0 \\ 0
	\end{bmatrix}
\end{align} 
where ${}^{\mathbf O} x_{\mathbf C}, {}^{\mathbf O} y_{\mathbf C}$ are the first and second element of ${}^{\mathbf O} \bp _{\mathbf C}$, and 
$\breve{v} _{x} , \breve{v} _{y} $ are the first and second element of $\breve{\bv}$. For brevity, we use $\breve{\omega}$ to denote the third element of $\breve{ \bomega}$.
By defining $\beta_r = \Delta Y^{-1} \alpha _r$, and
$\beta_l = \Delta Y^{-1} \alpha _l$,  we can write
\begin{align}
\label{eq:constraints_mono_5}
	\begin{bmatrix}
	\begin{bmatrix}
	\breve{\omega} {}^{\mathbf O} y_{\mathbf C} \\
	-\breve{\omega} {}^{\mathbf O} x_{\mathbf C} 
	\end{bmatrix} + 
	s \begin{bmatrix}
	\breve{v} _{x} \\
	\breve{v} _{y}
	\end{bmatrix} \\ 
	\breve{\omega}
	\end{bmatrix} = \begin{bmatrix}
	\breve{\omega} Y_l \\
	- \breve{\omega} X_v \\
	- \beta_l o_l + \beta_r o_r
	\end{bmatrix}
	+
	\begin{bmatrix}
	\beta_l \Delta Y o_l \\ 0 \\ 0
	\end{bmatrix}
\end{align}
Note that, this equation only contains 1) sensor measurements, and 2) a combination of vision scale factors and skid-steering kinematics:
\begin{align}
\bepsilon = \begin{bmatrix}
X_v& Y_l& Y_r& \alpha_l& \alpha_r& s
\end{bmatrix}^\top \notag
\end{align}

The identifiability of $\epsilon$ can be described as follows:
\begin{lem}
	By using measurements from a monocular camera and wheel odometers, $\bepsilon$ is not locally identifiable.
\end{lem}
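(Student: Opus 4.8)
The plan is to prove non-identifiability constructively: I will exhibit an explicit one-parameter family of kinematic/scale parameters that reproduces exactly the same camera and odometer measurements as the true parameters, which is equivalent to displaying a nonzero null direction of the local identifiability (observability) matrix assembled from \eqref{eq:constraints_mono_5}. First I would expand the vector identity \eqref{eq:constraints_mono_5} into its three scalar rows, namely (i) $\breve{\omega}\,Y_l + \alpha_l o_l = \breve{\omega}\,{}^{\mathbf{O}}y_{\mathbf{C}} + s\,\breve{v}_x$, (ii) $-\breve{\omega}\,X_v = -\breve{\omega}\,{}^{\mathbf{O}}x_{\mathbf{C}} + s\,\breve{v}_y$, and (iii) $\breve{\omega}\,\Delta Y = -\alpha_l o_l + \alpha_r o_r$ with $\Delta Y = Y_l - Y_r$. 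Combining (i) and (iii) gives the symmetric right-wheel relation $\breve{\omega}\,Y_r + \alpha_r o_r = \breve{\omega}\,{}^{\mathbf{O}}y_{\mathbf{C}} + s\,\breve{v}_x$. These are the constraints the true parameters satisfy at every measurement instant, and the question is whether they pin $\bepsilon$ down uniquely.

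The crux of the argument is the observation that the planar ICR kinematics render the camera velocity measurements linearly dependent on the angular rate and the wheel speeds, so vision contributes no metric information independent of the odometer. Concretely, row (ii) gives $s\,\breve{v}_y = \breve{\omega}\,({}^{\mathbf{O}}x_{\mathbf{C}} - X_v)$, so $\breve{v}_y$ is a constant multiple of $\breve{\omega}$; row (i) gives $s\,\breve{v}_x = \breve{\omega}\,(Y_l - {}^{\mathbf{O}}y_{\mathbf{C}}) + \alpha_l o_l$, so $\breve{v}_x$ is a fixed linear combination of $\breve{\omega}$ and $o_l$ (equivalently of $\breve{\omega}, o_l, o_r$). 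Hence, no matter how rich the trajectory is, the regressor columns of \eqref{eq:constraints_mono_5} collapse and the system cannot separate the scale $s$ from the kinematic parameters. I expect the main obstacle to be making this rank deficiency rigorous, i.e. proving it holds for generic excitation and is an intrinsic consequence of the planar ICR model rather than an accidental degenerate motion.

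Given this, I would construct the invariant family explicitly. Writing the true values as $(X_v^0,Y_l^0,Y_r^0,\alpha_l^0,\alpha_r^0,s^0)$, I would substitute the above expressions for $\breve{v}_x$ and $\breve{v}_y$ into (i), (ii) and the right-wheel relation and match the coefficients of the independent signals $\breve{\omega}$ and $o_l$ (resp. $o_r$). This shows that for every $\lambda>0$ the parameters $s=\lambda s^0$, $\alpha_l=\lambda\alpha_l^0$, $\alpha_r=\lambda\alpha_r^0$, $X_v={}^{\mathbf{O}}x_{\mathbf{C}}+\lambda(X_v^0-{}^{\mathbf{O}}x_{\mathbf{C}})$, $Y_l={}^{\mathbf{O}}y_{\mathbf{C}}+\lambda(Y_l^0-{}^{\mathbf{O}}y_{\mathbf{C}})$, $Y_r={}^{\mathbf{O}}y_{\mathbf{C}}+\lambda(Y_r^0-{}^{\mathbf{O}}y_{\mathbf{C}})$ satisfy every constraint for the same measurement stream; I would also verify directly that (iii) is preserved, since both of its sides scale by $\lambda$. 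Geometrically this is a uniform scaling of the ICR geometry about the known camera position coupled to the visual scale, and it passes through the truth at $\lambda=1$.

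Finally, I would conclude by differentiating the family at $\lambda=1$ to obtain the explicit nonzero null direction $[\,X_v^0-{}^{\mathbf{O}}x_{\mathbf{C}},\,Y_l^0-{}^{\mathbf{O}}y_{\mathbf{C}},\,Y_r^0-{}^{\mathbf{O}}y_{\mathbf{C}},\,\alpha_l^0,\,\alpha_r^0,\,s^0\,]^\top$ of the identifiability matrix, which is nonzero because $\alpha_l^0,\alpha_r^0,s^0 \neq 0$. This establishes that $\bepsilon$ is not locally identifiable and, as a by-product, pinpoints the ambiguity as the familiar monocular scale coupling to the track geometry, motivating the addition of a metric sensor (the IMU) to break it.
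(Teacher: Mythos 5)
Your proof is correct, and it reaches the same unobservable direction as the paper, but by a genuinely different route. The paper first reparametrizes to $\bar{\bepsilon} = \begin{bmatrix} Y_l & \Delta Y & X_v & \beta_l & \beta_r & s\end{bmatrix}^\top$ with $\beta_{l} = \Delta Y^{-1}\alpha_l$, $\beta_r = \Delta Y^{-1}\alpha_r$, stacks the Jacobians $\mathbf{D}(t)$ of the three scalar constraints into the observability matrix $\mathcal{O}_c$ of Eq.~\ref{eq:occ}, and exhibits an explicit vanishing combination of its block columns, namely $(Y_l - {}^{\mathbf O}y_{\mathbf C})\,\mathcal{O}_c(:,1) + \Delta Y\,\mathcal{O}_c(:,2) + (X_v - {}^{\mathbf O}x_{\mathbf C})\,\mathcal{O}_c(:,3) + s\,\mathcal{O}_c(:,6) = \mathbf 0$ — an infinitesimal rank-deficiency argument invoking the criterion of~\cite{van2009identifiability}. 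You instead construct the finite one-parameter family (a scaling of the ICR geometry about the camera's planar position coupled to the monocular scale) and differentiate at $\lambda = 1$; I verified your algebra, and your family does preserve all three constraints for every input signal, since $\Delta Y$, $\alpha_l$, $\alpha_r$, $s$ all scale by $\lambda$ while $Y_l - {}^{\mathbf O}y_{\mathbf C}$ and $X_v - {}^{\mathbf O}x_{\mathbf C}$ do too. Your tangent vector $\begin{bmatrix} X_v^0 - {}^{\mathbf O}x_{\mathbf C} & Y_l^0 - {}^{\mathbf O}y_{\mathbf C} & Y_r^0 - {}^{\mathbf O}y_{\mathbf C} & \alpha_l^0 & \alpha_r^0 & s^0 \end{bmatrix}^\top$ is exactly the paper's null vector transported to the original coordinates: along your family $\beta_l$ and $\beta_r$ are invariant, which is precisely why the paper's combination has zero coefficients on columns 4 and 5. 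What each approach buys: your family argument is self-contained and slightly stronger — it establishes indistinguishability for \emph{finite} perturbations and for \emph{all} excitations (so your mid-proof worry about ``generic excitation'' is moot; no genericity is needed for non-identifiability), and it needs no rank criterion at all. The paper's matrix formulation, by contrast, sets up machinery ($\mathbf{D}(t)$, $\mathcal{O}_c$, the $\bar{\bepsilon}$ reparametrization) that is reused in the next lemma for the IMU-aided case, where one must prove \emph{full} rank and a family construction would not suffice. Two small points of hygiene: restrict $\lambda$ to a neighborhood of $1$ so that $\Delta Y = \lambda \Delta Y^0 \neq 0$ keeps the model well defined, and state explicitly (as you do implicitly) that $\alpha_l^0, \alpha_r^0, s^0 \neq 0$ so the tangent direction is nonzero.
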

\begin{proof}
	 $\bepsilon$ is locally identifiable if and only if $\bar{\bepsilon}$ is locally identifiable:
	\begin{align}
	\label{eq:simple}
	\bar{\bepsilon} = \begin{bmatrix} Y_l& \Delta Y&  X_v& \beta_l& \beta_r& s \end{bmatrix}^\top \notag
	\end{align} 
	
	By expanding Eq.~\ref{eq:constraints_mono_5}, we can write the following constraints:
	\begin{subequations}
		\label{eq:constraints_mono_5_ex}
		\begin{align}
		c_x(\bar{\bepsilon}, t) &= \breve{\omega}(t) {}^{\mathbf O} y_{\mathbf C} + s \breve{v} _{x}(t) - \breve{\omega}(t) Y_l - \beta_l \Delta Y o_l(t) = 0\\
		c_y(\bar{\bepsilon}, t) &=  -\breve{\omega}(t) {}^{\mathbf O} x_{\mathbf C} + s \breve{v} _{y}(t) + \breve{\omega}(t) X_v = 0\\
		c_{\omega}(\bar{\bepsilon}, t) &= \breve{\omega}(t) + \beta_l o_l(t) - \beta_r o_r(t) =0
		\end{align}
	\end{subequations}
	A necessary and sufficient condition of $\bar{\bepsilon}$ to be locally identifiable is the following observability matrix has full column rank~\cite{van2009identifiability}:
		\begin{align}
		\label{eq:occ}
	\mathcal{O}_c = \begin{bmatrix}
	\mathbf{D}(t_0)^\top & \mathbf{D}(t_1)^\top& \dots& \mathbf{D}(t_s)^\top
	\end{bmatrix}^\top
	\end{align}
	where
		\begin{align}
		\mathbf{D}(t) &= \begin{bmatrix}
	\frac{\partial c_x(\bar{\bepsilon}, t)}{\partial \bar{\bepsilon}} & \frac{\partial c_y(\bar{\bepsilon}, t)}{\partial \bar{\bepsilon}} & \frac{\partial c_{\omega}(\bar{\bepsilon}, t)}{\partial \bar{\bepsilon}}
	\end{bmatrix} ^\top \notag \\
	\label{eq:d}
	&= \begin{bmatrix}\begin{smallmatrix}
	- \breve{\omega}(t)& -\beta_l o_l(t)& 0& -\Delta Y o_l(t)& 0& \breve{v} _{x}(t)\\
	0& 0& \breve{\omega}(t)& 0& 0& \breve{v} _{y}(t)\\
	0& 0& 0& o_l(t)& -o_r(t)& 0
	\end{smallmatrix}
	\end{bmatrix} 
	\end{align}
	%Equivalently, from the definition of identifiability, there is one and only one vector $\mathbf{k}\in \mathbb{R}^6$ satisfying the constraints in Eq.~\ref{eq:constraints_mono_5_ex} for $t>0$~\cite{van2009identifiability}.
	%Equivalently, there is no nonzero vector $\mathbf{k}=\begin{bmatrix} k_1& k_2& k_3& k_4& k_5& k_6 \end{bmatrix}$ such that $\mathcal{O}_c * \mathbf{k} =0$ holds.
	%
	Putting Eq.~\ref{eq:d} back into Eq.~\ref{eq:occ} leads to:
	\begin{align}
	\label{eq:obser_matrix_5icr}
	\mathcal{O}_c = \begin{bmatrix}\begin{smallmatrix}
	- \breve{\omega}(t_0)& -\beta_l o_l(t_0)& 0& -\Delta Y o_l(t_0)& 0& \breve{v} _{x}(t_0)\\
	0& 0& \breve{\omega}(t_0)& 0& 0& \breve{v} _{y}(t_0)\\
	0& 0& 0& o_l(t_0)& -o_r(t_0)& 0 \\
	\vdots& \vdots& \vdots& \vdots& \vdots& \vdots\\
	- \breve{\omega}(t_s)& -\beta_l o_l(t_s)& 0& -\Delta Y o_l(t_s)& 0& \breve{v} _{x}(t_s)\\
	0& 0& \breve{\omega}(t_s)& 0& 0& \breve{v} _{y}(t_s)\\
	0& 0& 0& o_l(t_s)& -o_r(t_s)& 0
	\end{smallmatrix}
	\end{bmatrix}
	\end{align}
	By defining  $\mathcal{O}_c(:,1)$ the $i$th block columns of $\mathcal{O}_c$, the following equation holds:
	\begin{align}
	 (-{}^{\mathbf O} y_{\mathbf C} \!+\! Y_l)\cdot\mathcal{O}_c(:,1) \!+\! \Delta Y \cdot \mathcal{O}_c(:,2)  \!+\! (X_v - {}^{\mathbf O} x_{\mathbf{C}}) \cdot \mathcal{O}_c(:,3)\!+\! s \cdot \mathcal{O}_c(:,6)= \mathbf 0 \notag
	\end{align}
	which demonstrates that $\mathcal{O}_c$ is {\em not} of full column rank. This completes the proof.
\end{proof}
% TODO: Do you need to compute the null space of Eq.~\ref{eq:obser_matrix_5icr}?

\subsection{Observability of $\boldsymbol \xi$ with a monocular camera, an IMU, and odometer}
When an IMU is added, the `abstract' measurement of visual-inertial estimation can be also derived.
Visual-inertial estimation provides: camera's local (\romannumeral1) angular velocity and (\romannumeral2) linear velocity, 
similar to vision only case (Eq.~\ref{eq:camera_measurement1}) without having scale effect~\cite{li2014online}.
Similarly to Eq.~\ref{eq:simple}, to simplify the analysis, we prove identifiability of $\bar{\boldsymbol \xi}$
instead of ${\boldsymbol \xi}$:
\begin{align}
\bar{\boldsymbol \xi} = \begin{bmatrix} Y_l& \Delta Y&  X_v& \beta_l& \beta_r\end{bmatrix}^\top \notag
\end{align}

\begin{lem}
	By using measurements from a monocular camera, an IMU, and wheel odometer, $\bar{\boldsymbol \xi}$ is locally identifiable, except for following degenerate cases: (\romannumeral1) velocity of one of the wheels, $o_l(t)$ or $o_r(t)$, keeps zero; (\romannumeral2) $\breve{\omega}(t)$ keeps zero; (\romannumeral3) $o_r (t)$, $o_l (t)$, and $\breve{\omega}(t)$ are all constants; (\romannumeral4) $o _{l}(t)$ is always proportional to $o_r(t)$.
\end{lem}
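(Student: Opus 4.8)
The plan is to reuse the machinery of the previous lemma while exploiting that the IMU removes the unknown visual scale $s$. Because visual-inertial estimation recovers \emph{metric} camera velocity, $s$ is now known, so I would drop it from the unknowns, leaving the five-dimensional $\bar{\boldsymbol\xi}=[\,Y_l,\ \Delta Y,\ X_v,\ \beta_l,\ \beta_r\,]^\top$. The three scalar constraints $c_x,c_y,c_\omega$ of \eqref{eq:constraints_mono_5_ex} continue to hold verbatim (with $s$ treated as a known constant), and the per-instant Jacobian is obtained from \eqref{eq:d} simply by deleting its last ($s$-)column:
\begin{equation*}
\mathbf{D}(t)=\begin{bmatrix}
-\breve{\omega}(t) & -\beta_l o_l(t) & 0 & -\Delta Y\, o_l(t) & 0\\
0 & 0 & \breve{\omega}(t) & 0 & 0\\
0 & 0 & 0 & o_l(t) & -o_r(t)
\end{bmatrix}.
\end{equation*}
By \cite{van2009identifiability}, local identifiability of $\bar{\boldsymbol\xi}$ is equivalent to the stacked matrix $\mathcal{O}_c$ of \eqref{eq:occ} having full column rank $5$.

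The qualitative reason the IMU helps is immediate: in the vision-only proof the unique rank-deficiency relation combined columns $1,2,3$ with the scale column $\mathcal{O}_c(:,6)$, and deleting that column destroys this null combination. To make this rigorous I would analyze the null space directly. A vector $\mathbf{a}=[a_1,\dots,a_5]^\top$ with $\mathcal{O}_c\mathbf{a}=\mathbf{0}$ must satisfy, for every sampled time, the three identities $a_3\,\breve{\omega}(t)=0$, $a_4\,o_l(t)-a_5\,o_r(t)=0$, and $a_1\,\breve{\omega}(t)+(a_2\beta_l+a_4\Delta Y)\,o_l(t)=0$. The second identity forces $a_3=0$ as soon as $\breve{\omega}\not\equiv 0$; the third forces $a_4=a_5=0$ as soon as $o_l,o_r$ are linearly independent as functions of time; with $a_4=0$ the first then forces $a_1=a_2=0$ as soon as $\breve{\omega},o_l$ are linearly independent (using $\beta_l\neq 0$). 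Hence $\mathbf{a}=\mathbf{0}$ and $\mathcal{O}_c$ has full rank, establishing identifiability under general motion.

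It then remains to verify that the four listed conditions are exactly the situations in which one of these independence requirements fails, so I would exhibit the rank loss in each. For (ii) $\breve{\omega}\equiv 0$, the $Y_l$- and $X_v$-columns vanish; for (i) $o_l\equiv 0$ the $\Delta Y$- and $\beta_l$-columns vanish (and symmetrically $o_r\equiv 0$ kills the $\beta_r$-column); for (iii) all signals constant, $\mathbf{D}(t)$ is time-invariant, so $\mathrm{rank}(\mathcal{O}_c)=\mathrm{rank}(\mathbf{D})\le 3<5$; and for (iv) $o_l=c\,o_r$, the explicit vector $\mathbf{a}=[\,0,\ -\Delta Y/\beta_l,\ 0,\ 1,\ c\,]^\top$ annihilates all three rows and is a nontrivial null vector. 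In every case $\mathcal{O}_c$ is rank-deficient, so $\bar{\boldsymbol\xi}$ is unidentifiable.

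The main obstacle is the completeness claim---showing these four cases are the \emph{only} degeneracies, not merely sufficient for failure. The delicate bridge is the first identity: a bare ``linear dependence of $\breve{\omega}$ and $o_l$'' must be translated, through the yaw constraint $c_\omega$ (which at the true parameters reads $\breve{\omega}=\beta_r o_r-\beta_l o_l$), into a statement purely about the wheel speeds, so that this last failure mode collapses onto proportionality of $o_l,o_r$ (case iv) or an idle wheel (case i). Some care is also needed at the boundaries---an idle wheel is the limiting case of proportionality, and case (iii) is the time-invariant specialization of case (iv)---so that under genuinely general motion (time-varying, non-proportional wheel speeds with nonzero yaw rate) none of the four hold and identifiability follows.
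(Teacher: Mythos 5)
Your proposal is correct and follows essentially the same route as the paper: removing the visual scale from the monocular-case constraints, stacking the same $5$-column observability matrix $\mathcal{O}_c$, and characterizing when it loses column rank, arriving at exactly the four listed degeneracies (the paper reaches the same matrix and simplifies it by column operations where you instead analyze the null space directly, which is equivalent). If anything, your treatment is more complete than the paper's own proof, which merely asserts the four cases: your explicit null vector for case (iv) and the translation of ``$\breve{\omega}$ linearly dependent on $o_l$'' through $c_\omega$ (i.e., $\breve{\omega}=\beta_r o_r-\beta_l o_l$) into cases (i)/(iv) supplies precisely the completeness argument the paper leaves implicit.
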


\begin{proof}
	Similarly to Eq.~\ref{eq:constraints_mono_5_ex}, by removing the scale factor, the constraints become:
	\begin{subequations}
		\label{eq:constraints_stereo_5_ex}
		\begin{align}
		c_x(\bar{\boldsymbol \xi}, t) &= \breve{\omega}(t) {}^{\mathbf O} y_{\mathbf C} +  \breve{v} _{x}(t) - \breve{\omega}(t) Y_l - \beta_l \Delta Y o_l(t) = 0\\
		c_y(\bar{\boldsymbol \xi}, t) &=  -\breve{\omega}(t) {}^{\mathbf O} x_{\mathbf C} +  \breve{v} _{y}(t) + \breve{\omega}(t) X_v = 0\\
		c_{\omega}(\bar{\boldsymbol \xi}, t) &= \breve{\omega}(t) + \beta_l o_l(t) - \beta_r o_r(t) =0
		\end{align}
	\end{subequations}
	The observability matrix for $\bar{\boldsymbol \xi}$ then becomes:
	\begin{align}
	\label{eq:obser_matrix_5icr_stereo}
	\mathcal{O}_c = \begin{bmatrix}\begin{smallmatrix}
	- \breve{\omega}(t_0)& -\beta_l o_l(t_0)& 0& -\Delta Y o_l(t_0)& 0\\
	0& 0& \breve{\omega}(t_0)& 0& 0\\
	0& 0& 0& o_l(t_0)& -o_r(t_0) \\
	\vdots& \vdots& \vdots& \vdots& \vdots\\
	- \breve{\omega}(t_s)& -\beta_l o_l(t_s)& 0& -\Delta Y o_l(t_s)& 0\\
	0& 0& \breve{\omega}(t_s)& 0& 0\\
	0& 0& 0& o_l(t_s)& -o_r(t_s)
	\end{smallmatrix}
	\end{bmatrix}
	\end{align}
	which can be simplified by linear operations:
	\begin{align}
	\label{eq:obser_matrix_5icr_stereo_af}
	\mathcal{O}_c = \begin{bmatrix}\begin{smallmatrix}
	- \breve{\omega}(t_0)& o_l(t_0)& 0& 0& 0\\
	0& 0& \breve{\omega}(t_0)& 0& 0\\
	0& 0& 0& o_l(t_0)& -o_r(t_0) \\
	\vdots& \vdots& \vdots& \vdots& \vdots\\
	- \breve{\omega}(t_s)& o_l(t_s)& 0& 0& 0\\
	0& 0& \breve{\omega}(t_s)& 0& 0\\
	0& 0& 0& o_l(t_s)& -o_r(t_s) 
	\end{smallmatrix}		
	\end{bmatrix}
	\end{align}
	There are four special cases to make $\mathcal{O}_c$ not of full column rank:
	(\romannumeral1) velocity of one of the wheels, $o_l(t)$ or $o_r(t)$, keeps zero; (\romannumeral2) $\breve{\omega}(t)$ keeps zero; (\romannumeral3) $o_r (t)$, $o_l (t)$, and $\breve{\omega}(t)$ are all constants; (\romannumeral4) $o _{l}(t)$ is always proportional to $o_r(t)$. If none of those conditions is met, $\mathcal{O}_c$ is of full column rank. 
	This reveals that, under general motion, $\boldsymbol \xi$ is locally identifiable. This completes the proof.
\end{proof}

%We also note that, although we estimate $\boldsymbol \xi$ online,
%it is basically a parameter vector which changes when physical condition changes (e.g., slow tire deflation). This is not like motion variables (e.g., velocities)

\begin{figure}[t]
	\centering 
	\subfigure[]{ 
		\includegraphics[width=0.8in]{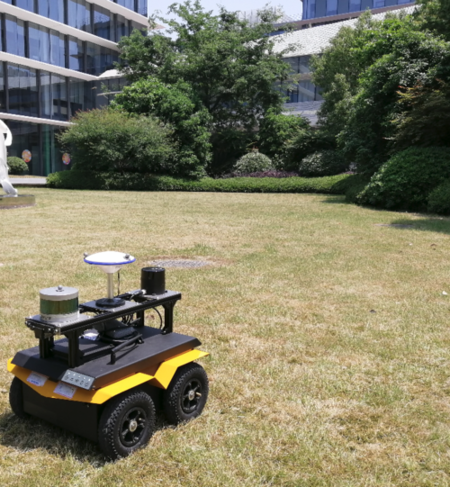} 
	} 
	\subfigure[]{ 
		\includegraphics[width=0.8in]{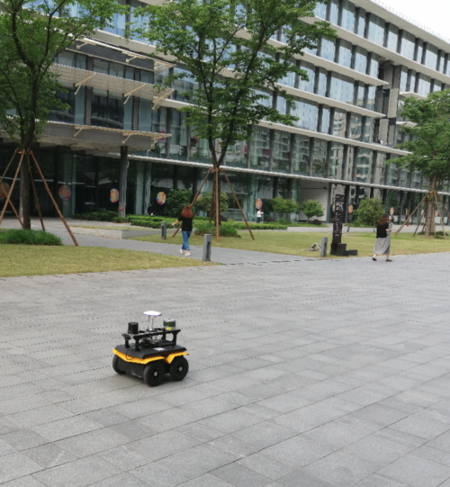} 
	} 
	\subfigure[]{ 
		\includegraphics[width=0.8in]{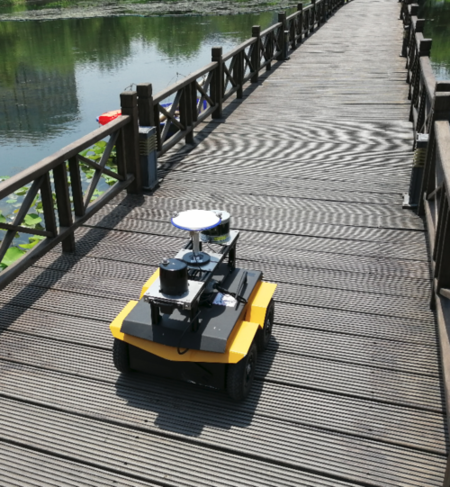} 
	} 
	\subfigure[]{ 
		\includegraphics[width=0.8in]{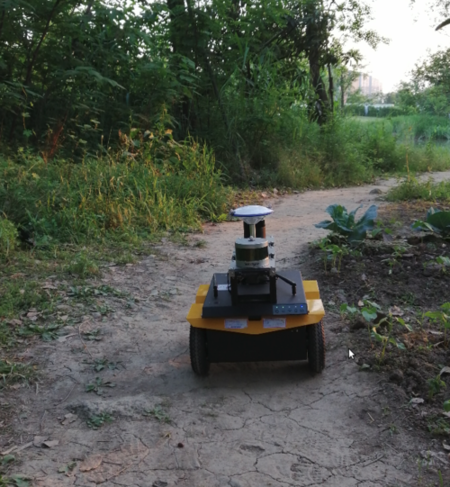} 
	} \\
	\subfigure[]{ 
		\includegraphics[width=0.8in]{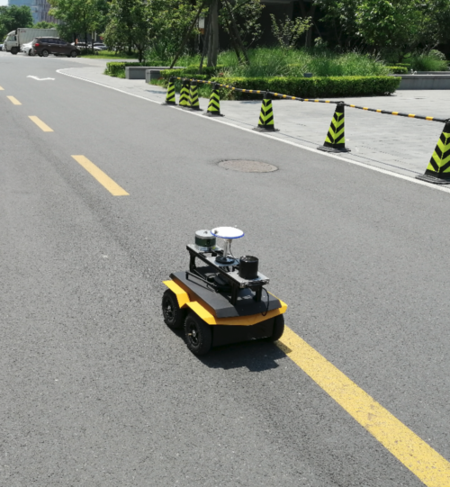} 
	} 
	\subfigure[]{ 
		\includegraphics[width=0.8in]{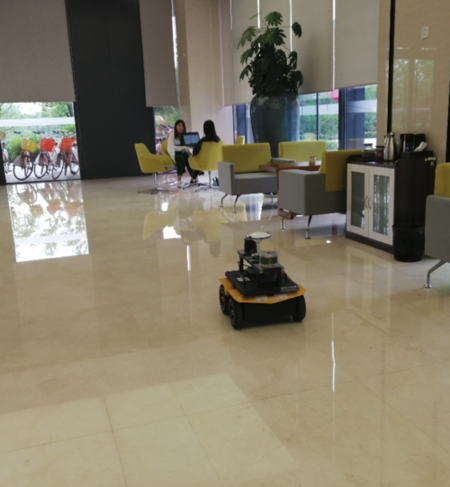} 
	} 
	\subfigure[]{ 
		\includegraphics[width=0.8in]{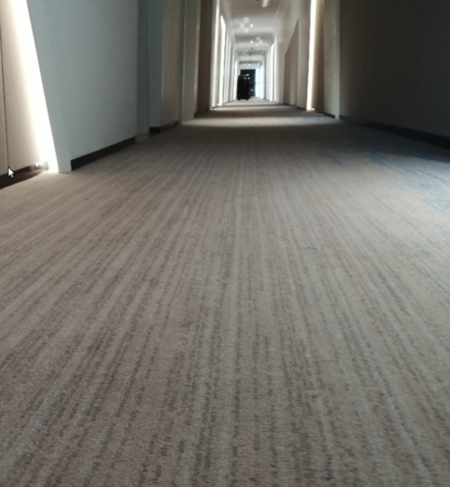} 
	} 
	\subfigure[]{ 
		\includegraphics[width=0.8in]{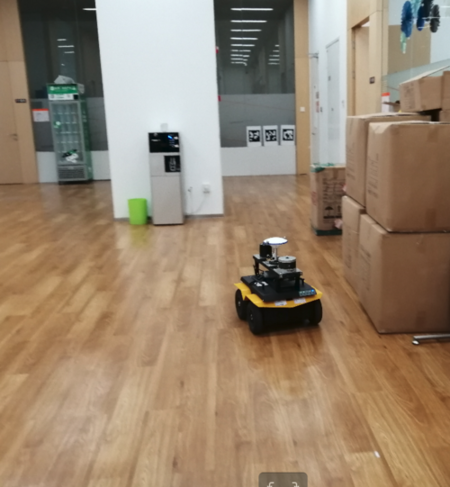} 
	} 	
	\caption{Skid-Steering robot traverses variable terrains: (a) lawn, (b) cement brick, (c) wooden bridge, (d)  muddy road, (e) asphalt road, (f) ceramic tiles, (g) carpet, and (h) wooden floor.} \label{fig:terrain}
\end{figure}

\section{Experimental Results}
\label{sec:exp}

% \subsection{Tests under Variable Terrains}
%

\begin{figure}[H]
	\centering 
	\includegraphics[scale=.24]{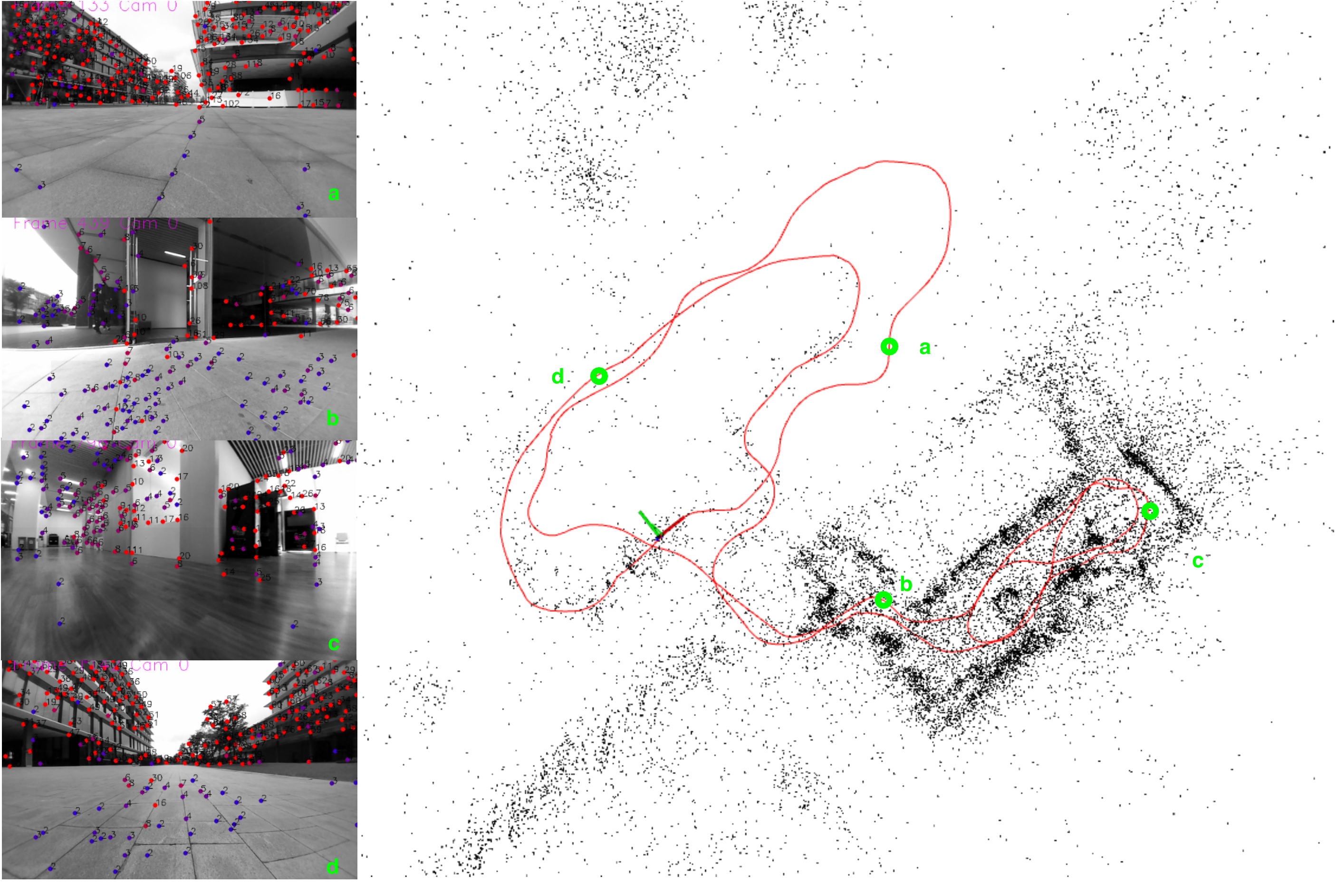} 
	\caption{Skid-steering robots traversed outdoors and indoors. The left part shows the representative images with visual features recorded at positions marked by green circles respectively. The right part shows the estimated trajectory red curve, and estimated 3D landmarks by black dots.} \label{fig:visual_map}
\end{figure}
\begin{table*}[t]
	\renewcommand{\arraystretch}{1.5}
	\caption{Estimating $\boldsymbol \xi$ or not: Final drift test.}
	\label{tb:drift}
	\begin{center}
		\resizebox{0.95\textwidth}{!}
		{
			\begin{tabular}{c|c|c|c|c|c|c|c|c|c|c}\cline{4-11}
				\multicolumn{3}{c|}{}&\multicolumn{4}{c|}{\textbf{W/  $\boldsymbol \xi$}}&\multicolumn{4}{c}{\textbf{W/O $\boldsymbol \xi$}}\\\cline{1-11}
				\textbf{Sequence}&\textbf{ Length(m)}& \textbf{Terrain}&\textbf{Norm(m)}&\textbf{x(m)}&\textbf{y(m)}&\textbf{z(m)}&\textbf{Norm(m)}&\textbf{x(m)}&\textbf{y(m)}&\textbf{z(m)}\\\hline
				CP02-2019-04-03-16-24-35& 232.30& (b) &  4.3617  & 0.399317& 4.31398& 0.504615&6.2460 & 0.73324& 4.84088& 3.87821 \\
				CP01-2019-04-24-17-17-35& 193.63& (f) &  0.4497   & -0.0717224& 0.0935681& 0.434018&	3.7052& 2.77887& -1.15464& 2.1617 \\
				CP01-2019-04-19-15-42-40& 632.64& (b,f) &  1.9297 &0.793459& 1.27422& 1.21264 & 28.9106	& -27.5063& -6.27234& 6.31525 \\
				CP01-2019-04-19-15-56-09& 629.96& (b,f) & 5.9323  &-5.77259& -0.738212& 1.15102&	35.9580& -33.9281& -10.1135& 6.29139 \\
				CP01-2019-04-19-16-09-53& 626.83& (b,f) &  1.5361 &0.505865& 0.777903& 1.22411 &	31.0437& -29.3167& -7.9511& 6.4049 \\
				
				%====
				CP01-2019-04-25-11-16-06& 212.59& (g) &  8.5111   &5.89016& -6.12049& 0.532888 &	10.0050& 8.17044& -5.28882& 2.31773 \\
				CP01-2019-05-08-18-06-43& 51.44& (a) &  0.3207   &-0.211653& -0.232218& -0.064136&	 0.8351& -0.30158& -0.155709& 0.762975 \\
				CP01-2019-05-08-17-50-51& 204.81& (e) &  0.7255   &-0.218743& 0.0247195& 0.691284& 2.2180	& -0.439685& 0.0659946& 2.17303 \\
				CP01-2019-05-09-10-59-53& 77.63& (c) &   0.3402  &-0.00968671& 0.339343& -0.0217497&	1.0127& 0.245799& 0.239745& 0.95274 \\
				CP01-2019-05-09-11-09-51& 27.09& (a) &  0.2239   &-0.125968& -0.0555045& 0.176533 & 0.5191&  -0.136975& -0.154332& 0.476361 \\
				%===
				
				CP01-2019-05-09-11-24-39& 270.41& (e,b) & 0.6447  & 0.263564& -0.252776& 0.531292,&	3.1481& 0.261559& -0.326145& 3.1202 \\
				CP01-2019-05-08-17-42-01& 436.19& (e) &  0.7474   &0.184071& 0.1038& 0.716877&	7.0835& 0.24108& 5.48359& 4.47751 \\
				CP01-2019-05-08-18-13-21& 28.64& (d) &   0.0697   &0.0325226& -0.0173858& 0.0590985& 0.3496	& 0.0587353& 0.100583& 0.329623 \\
				CP01-2019-04-19-14-57-38& 372.15& (b) &   8.7967  & 8.65849& 1.16737& 1.02493&	13.2613 &12.7065& 1.60303& 3.44035 \\
				CP02-2019-04-25-20-49-04& 81.03& (h) &  2.2940   &-2.2311& 0.459289& -0.271489,& 2.5641	&-2.26811& 0.200971& 1.17896 \\
				
				%====
				CP02-2019-04-25-21-07-46& 53.49& (h) &   0.6084  &-0.515633& 0.30291& -0.111944& 1.0931	& -0.757592& 0.111233& 0.780112 \\
				CP01-2019-05-27-14-32-36& 110.55& (b) &   0.7769   & -0.26637& -0.723188& 0.098119&1.3461	&  -0.286492& -0.82653& 1.0231 \\
				CP01-2019-05-27-14-41-33& 104.63& (h) &  0.3790   &0.327562& -0.0804229& 0.172905& 1.3777	& 0.422314& -0.769413& 1.06189 \\
				CP01-2019-05-27-14-46-21& 214.66& (b,h) & 1.5367  & -1.09328& -0.927257& 0.553504&2.4919	& -0.319398& -1.39199& 2.04208 \\
			CP01-2019-05-27-14-50-49& 254.30& (b,h) & 0.8219  & 0.307967& -0.0732949& 0.75849&3.1792	&-0.615917& -2.03648& 2.36239 
				\\\hline
			\end{tabular}
		}
	\end{center}
\end{table*}
As shown in Fig.~\ref{fig:robots_icr}, 
our experiments were conducted by two skid-steering robots with both `localization' sensors and `ground-truth sensors' equipped.
For `localization' sensors, we used a $10$Hz monocular global shutter camera at resolution of $640 \times 400$,  a $200$Hz Bosch BMI160 IMU, and $100$Hz wheel odometers.
The `ground truth' sensor mainly relies on RTK-GPS, who reports $1$Hz data when the signal is reliable.
The accuracy of RTK-GPS is at centimeter level.
%
%It should be noted 
%that all the measurement processing and back-end optimization are in a \textbf{single-thread} in our framework, and 
%all the experiments run in real-time, since our method is efficient and light-weight.
%
%All the experiments are performed on an Intel Core  i7-8700 @ 3.20GHz CPU. The Clearpath robot has a top speed of 2 meters per second, and our algorithm does not show degeneration even that we play the recorded data at a speed of 7 times of the original.
%
%Besides the reprojection error from image, and IMU error in general visual inertial navigation system, we also use the motion manifold constraints, and 5-parameter ICR based kinematic model to capture the slippage of the wheels. 
%
%In this section, we focus on validating the effectiveness of ICR model inside the visual localization system for skid-steering robot. 

The first experiment is to demonstrate the improvement of localization accuracy by estimating $\boldsymbol \xi$ (Eq.~\ref{eq:ICRmat_5par}) online.
As shown in Fig~\ref{fig:terrain}, we conducted experiments under different environments, i.e., (a) lawn, (b) cement brick, (c) wooden bridge, (d) muddy road, (e) asphalt road, (f) ceramic tiles, (g) carpet, and (h) wooden floor.
Fig.~\ref{fig:visual_map} shows the trajectory and visual features estimated by the proposed method on sequence "CP01-2019-05-27-14-50-49", in which the robot traversed outdoors and indoors.
Since GPS signal is not available in all tests (e.g., indoor tests), we here used final drift as the first error metric. To make this possible, we started and terminated each experiment at the same position. 
Two algorithms were implemented in this test: 1) the proposed one by explicitly estimating $\boldsymbol \xi$, 
and 2) using Eq.~\ref{eq:ordinarymodel} without modeling $\boldsymbol \xi$\footnote{In fact, Eq.~\ref{eq:ordinarymodel} can be considered as one-parameter approximation of skid-steering kinematics, if $b$ is probabilistically estimated.}.

In Table.~\ref{tb:drift}, we show the final drift values on 20 representative sequences. 
%which cover all the eight terrains (a)~(g) in Fig.~\ref{fig:terrain}. Some sequences even cover multiple types of terrains. Fig.~\ref{fig:visual_map} shows the trajectory and visual features estimated by the proposed method on sequence "CP01-2019-05-27-14-50-49", in which the robot traversed the outdoors under terrain (b) and indoors under terrian (h).
%
Since we used the two robots, we used the notation ``CP01, CP02" to 
denote the names of the robots. %
%
%The length of the trajectory is estimated by the system with ICR model, and should be close to the true value. 
%
Table.~\ref{tb:drift} clearly demonstrates that when 
skid-steering kinematic parameters are estimated online, the localization accuracy can be significantly improved. In fact, in almost half of the tests, the errors are reduced by approximately a order of magnitude.
This validates our claim that to use odometer measurements of skid-steering robots, the complicated mechanism must be explicitly modeled to avoid accuracy loss. 
%
%It should be noted all the runs in Table.~\ref{tb:drift} use the same initial guess for $\boldsymbol \xi$, to make the experiments fair. Although there should be differences of the ICR between two different Clearpath robots, our system is able to provide good localization information on the all the sequences collected on  either "CP01" or "CP02".  And the proposed with ICR outperforms the ordinary without ICR a log under changed terrains, which benefit from the online estimated ICR which takes the slippage and condition of track-ground contact point into consideration.
%
\begin{figure}[t]
	\centering 
% 	\subfigure[]{ 
% 		\includegraphics[width=1.1in]{image/exp/traj_error/traj2019-04-19-14-57-38.eps} 
% 	} 
% 	\subfigure[]{ 
% 		\includegraphics[width=1.1 in]{image/exp/traj_error/traj2019-04-19-15-42-40.eps} 
% 	} 
% 	\subfigure[]{ 
% 		\includegraphics[width=1.1in]{image/exp/traj_error/traj2019-04-19-15-56-09.eps} 
% 	} 
% 	\subfigure[]{ 
% 		\includegraphics[width=1.1in]{image/exp/traj_error/traj2019-04-19-16-09-53.eps} 
% 	} 
%\\
	\subfigure[]{ 
		\includegraphics[width=1.1in]{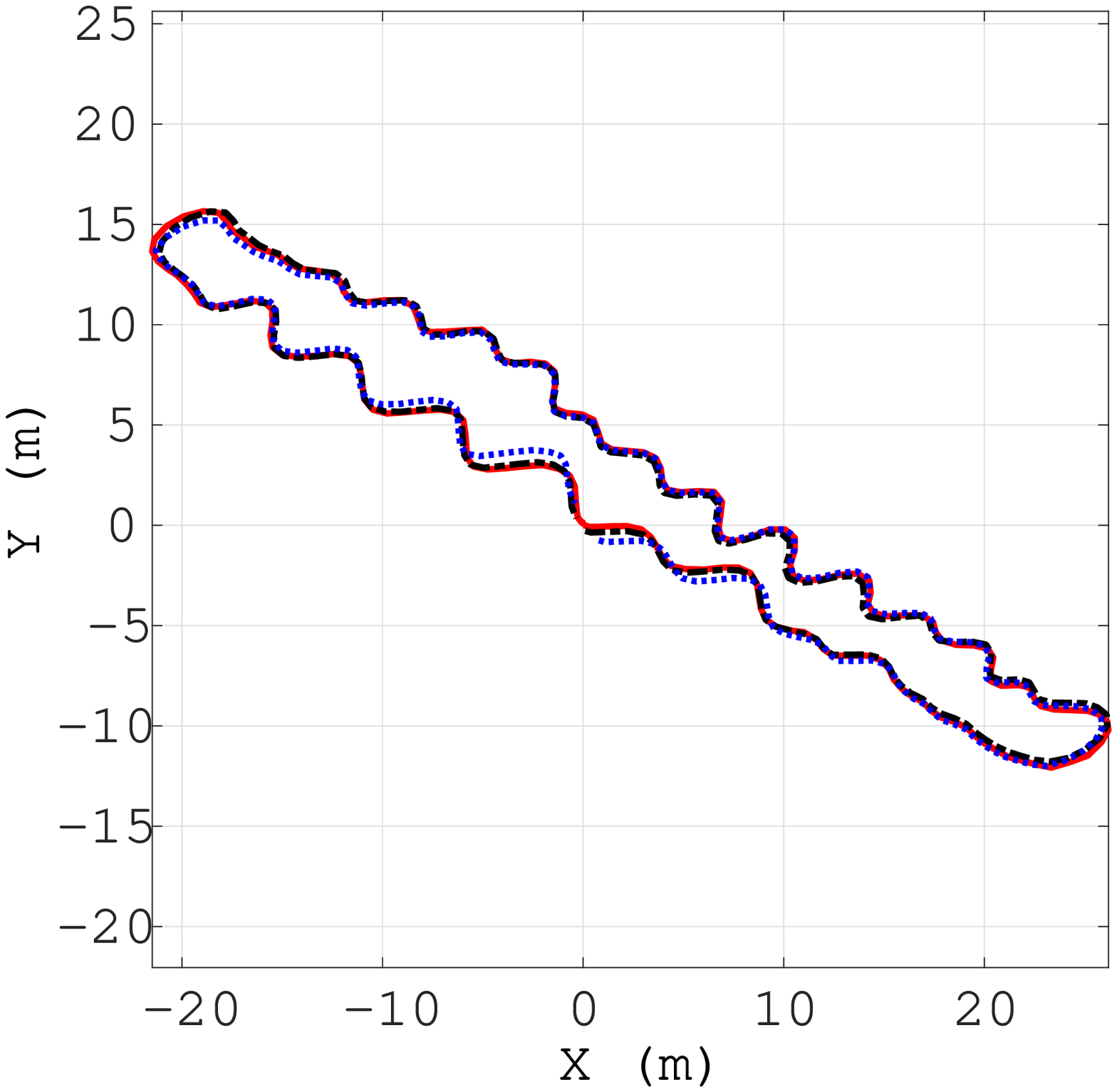} 
	} 
    \subfigure[]{ 
		\includegraphics[width=1.1in]{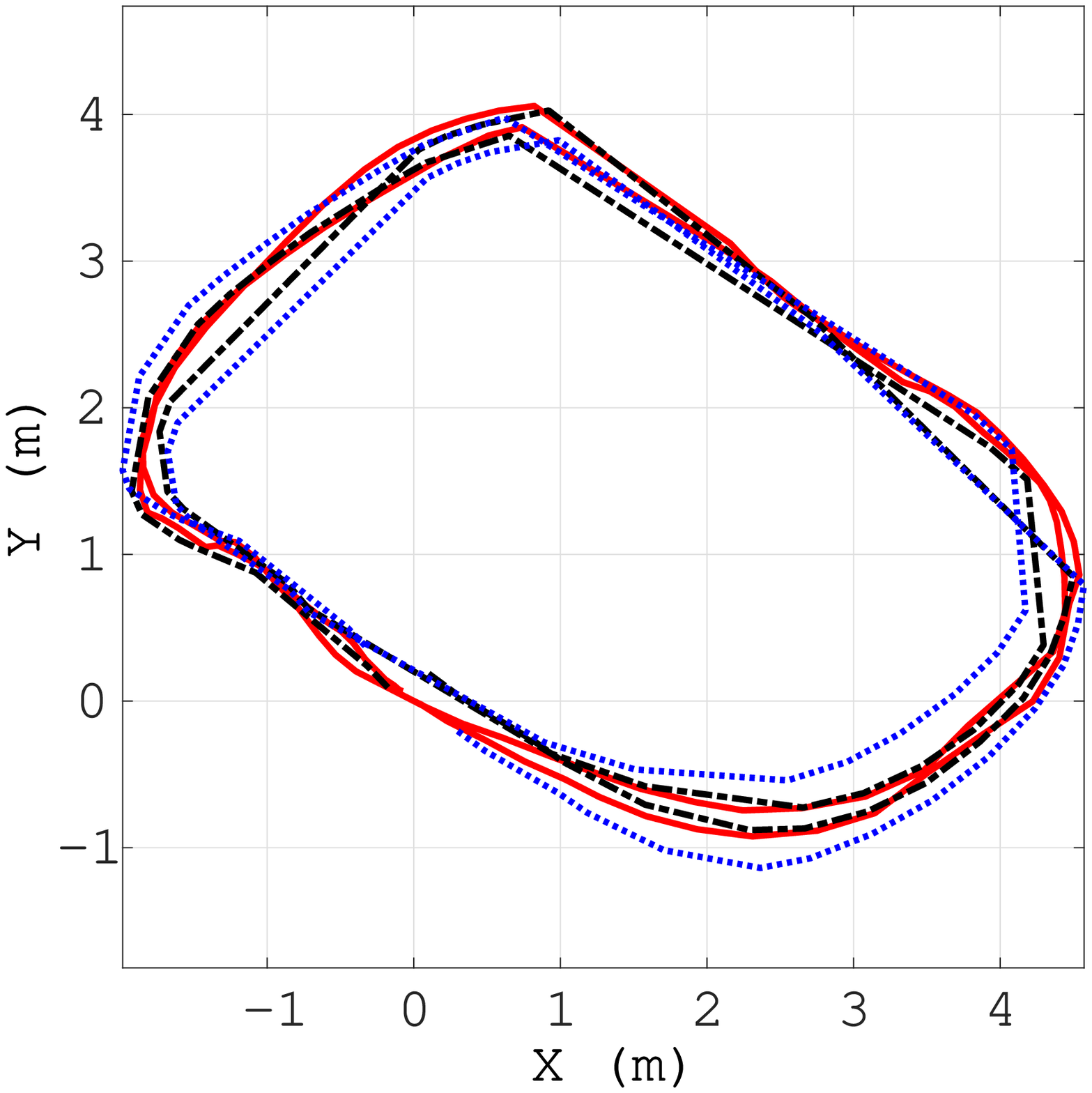} 
	}
    \subfigure[]{ 
		\includegraphics[width=1.1in]{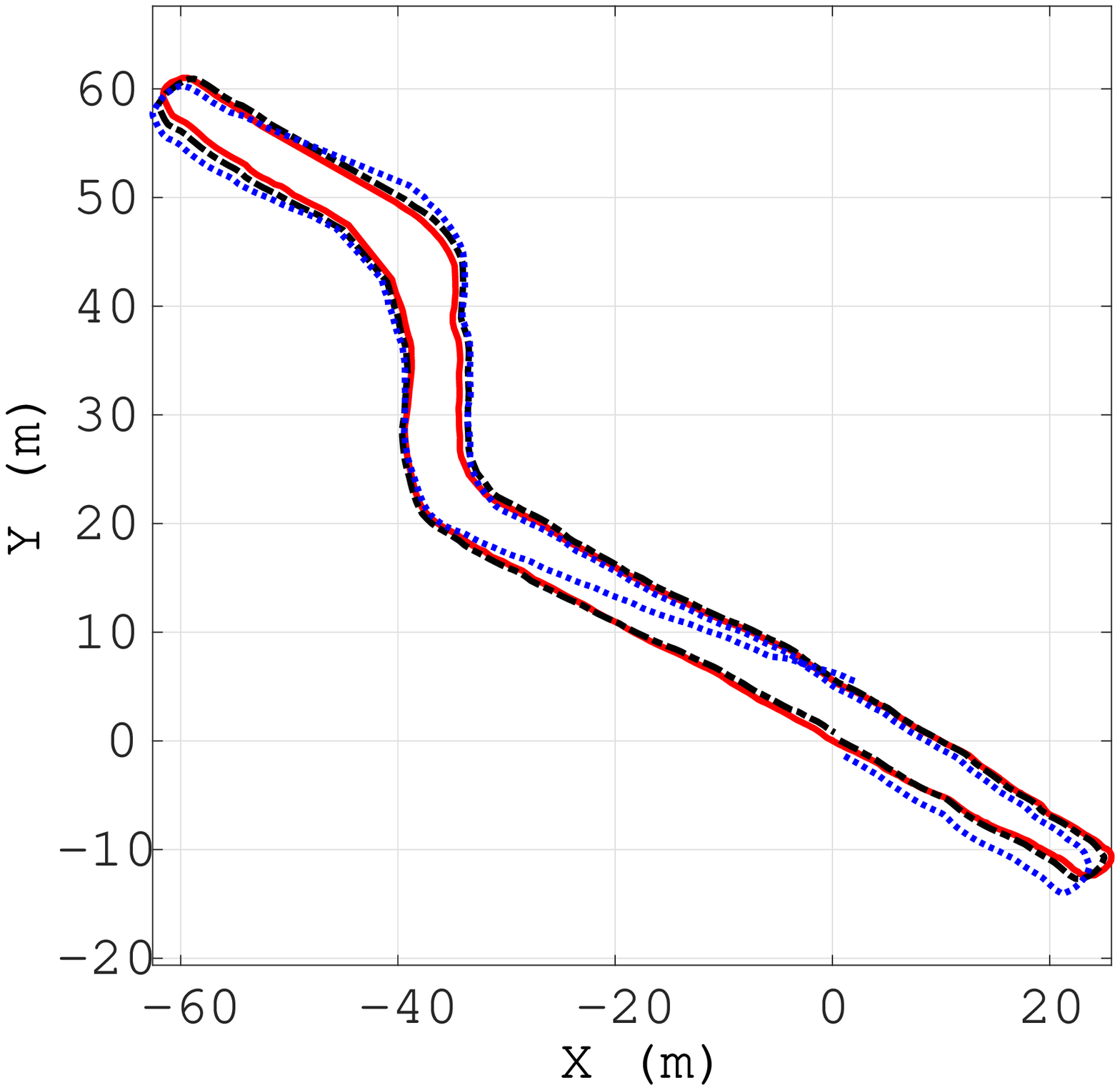} 
	} 
	\subfigure[]{ 
		\includegraphics[width=1.1in]{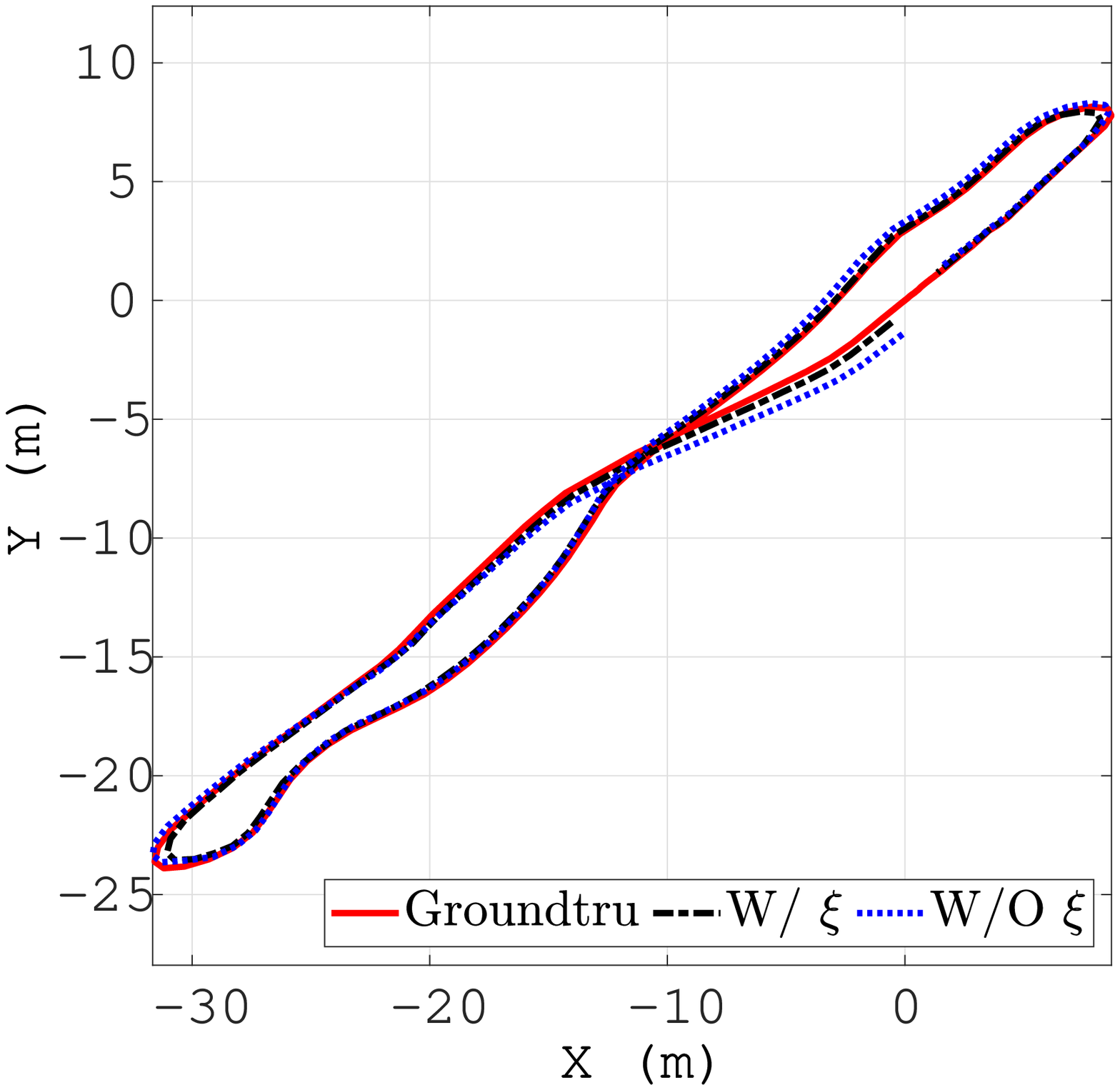} 
	} 	
	\caption{The estimated trajectory with and without $\boldsymbol \xi$. The trajectories of RTK-GPS (ground truth) are denoted by red firm lines, 
	the ones with modeling $\boldsymbol \xi$ by black dash dotted lines, and the ones without $\boldsymbol \xi$ by blue dotted lines.}
 ~\label{fig:traj_gps}
\end{figure}
In some sequences where GPS signals were available, we also evaluated the positional root mean square errors (RMSE)\cite{Bar-Shalom1988}. To compute that, we interpolated the estimated poses to get the ones corresponding to the timestamp of GPS measurements. The RMSE errors are shown in Table.~\ref{tb:gps_error}, which demonstrate that estimating $\boldsymbol \xi$ is beneficial for trajectory tracking. Trajectory estimates on representative sequences are shown in Fig.~\ref{fig:traj_gps}.
%
\iffalse
\begin{figure}[htb]
	\centering 
	\subfigure[]{ 
		\includegraphics[width=0.8in]{image/exp/traj_error/poserror_2019-04-19-14-57-38.eps} 
	} 
	\subfigure[]{ 
		\includegraphics[width=0.8in]{image/exp/traj_error/poserror_2019-04-19-15-42-40.eps} 
	} 
	\subfigure[]{ 
		\includegraphics[width=0.8in]{image/exp/traj_error/poserror_2019-04-19-15-56-09.eps} 
	} 
	\subfigure[]{ 
		\includegraphics[width=0.8in]{image/exp/traj_error/poserror_2019-04-19-16-09-53.eps} 
	} \\
	\subfigure[]{ 
		\includegraphics[width=0.8in]{image/exp/traj_error/poserror_2019-05-08-17-42-01.eps} 
	} 
	\subfigure[]{ 
		\includegraphics[width=0.8in]{image/exp/traj_error/poserror_2019-05-08-17-50-51.eps} 
	} 
	\subfigure[]{ 
		\includegraphics[width=0.8in]{image/exp/traj_error/poserror_2019-05-08-18-06-43.eps} 
	} 
	\subfigure[]{ 
		\includegraphics[width=0.8in]{image/exp/traj_error/poserror_2019-05-27-14-32-36.eps} 
	} 	
	\caption{Skid-Steering robot traverse variable terrains: (a) lawn, (b) cement brick, (c) rough wooden bridge, (d)  muddy road, (e) asphalt road, (f) ceramic tiles, (g) carpet, and (h) wooden floor.} \label{fig:traj_error}
\end{figure}
\fi
%
\begin{table*}[htb]
	\renewcommand{\arraystretch}{1.5}
	\caption{ Estimating $\boldsymbol \xi$ or not: RMSE Test}
	\label{tb:gps_error}
	\begin{center}
		\resizebox{0.6\textwidth}{!}
		{
			\begin{tabular}{c|c|c|c|c|c|c|c|c}\cline{4-9}
				\multicolumn{3}{c|}{}&\multicolumn{3}{c|}{\textbf{W/ $\boldsymbol \xi$}}&\multicolumn{3}{c}{\textbf{W/O $\boldsymbol \xi$}}\\\cline{1-9}
				\textbf{Sequence}&\textbf{ Length(m)}& \textbf{Terrain}&\textbf{Norm(m)}&\textbf{x(m)}&\textbf{y(m)}&\textbf{Norm(m)}&\textbf{x(m)}&\textbf{y(m)}\\\hline
					CP01-2019-04-19-15-42-40& 632.64& (b,f) &  0.82&  0.74&  0.37 &9.41&  6.14&  7.12\\
					CP01-2019-04-19-15-56-09& 629.96& (b,f) &  1.87&  1.06&  1.54& 9.94&  6.16&  7.80\\
					CP01-2019-04-19-16-09-53& 626.83& (b,f) &  1.56&  1.28&  0.89& 8.35&  5.70&  6.10\\

					%====
					CP01-2019-05-08-18-06-43& 51.44& (a) &  0.09&  0.07&  0.06& 0.22&  0.14&  0.17 \\
					CP01-2019-05-08-17-50-51& 204.81& (e) &  0.23&  0.17&  0.15& 0.44&  0.25&  0.36\\
					
					%===

					CP01-2019-05-08-17-42-01& 436.19& (e) &  0.57&  0.46&  0.33& 1.81&  0.99&  1.51\\
				    CP01-2019-04-19-14-57-38& 372.15& (b) &  1.62&  1.18&  1.12& 4.93&  3.69&  3.26 \\

					%====
					CP01-2019-05-27-14-32-36& 110.55& (b) &  0.37&  0.30&  0.22& 0.39&  0.16&  0.36

				\\\hline
			\end{tabular}
		}
	\end{center}
\end{table*}
\subsection{Convergence of Kinematic Parameters}
In this section, we show tests to demonstrate the convergence of $\boldsymbol{\xi}$ under general motion. Unlike the experiments in the previous section where relatively good initial values of kinematic parameters were used, we manually set `bad' initial value to kinematic parameters.
Specifically, we added following error terms to initial kinematic parameters used in the previous section (good values):
$\delta X_v = 0.08(m), \delta Y_l = 0.14(m), \delta Y_r = -0.1(m), \delta  \alpha_l =0.2, \delta  \alpha_r = 0.2 $.
We carried out tests on outdoor sequence  "CP01-2019-05-08-17-50-51" and indoor sequence "CP01-2019-05-27-14-41-33", which did not involve changes of terrain types on the fly.
In Fig.~\ref{fig:icr_converge}, the estimates of kinematic parameters are shown, along with the corresponding uncertainty envelopes.%
The results demonstrate that the kinematic parameters quickly converge to their correct values, and remains slow change rates for the rest of the trajectory.
The uncertainty envelopes also shrink quickly. 
The results exactly meet our theoretical expectations that $\boldsymbol \xi$ is locally identifiable under general motion.
%
%Since these two tests did not involve changes of terrain types on the fly, 
%The uncertainties of kinematic parameters keep reducing in the beginning of experiments due to the multiple types of sensory measurements.
%
%In addition, we can find that different kinematic parameters converge to the final values at different speed. The reason behind this is that the motion form during the experiments have different affects to different parameters.
%
%These results shows that high-quality estimates for the kinematic parameters can be obtained with the proposed approach.
%
%We can use the final converged values as the initial value of kinematic parameters when performing trajectory estimations, and the system will have better performance given good initial values.
%
\iffalse
%
\begin{figure}[H]
	\centering 
% 	\subfigure[]{ 
% 		\includegraphics[width=1.5in]{image/exp/icr_converge/traj20190601-195142.eps} 
% 	} 
	\subfigure[]{ 
		\includegraphics[width=2.2in]{image/exp/icr_converge/traj20190601-195303.eps} 
	} 
	\subfigure[]{ 
		\includegraphics[width=2.2in]{image/exp/icr_converge/traj20190601-195508.eps} 
	} 	
	\caption{The estimated trajectories by the system with poor initial values of $\boldsymbol \xi$ in two tests.}
 ~\label{fig:icr_converge_traj}
\end{figure}
\fi

% \begin{figure}[H]
% 	\centering 
% 	\subfigure[]{ 
% 		\includegraphics[width=2.0in]{image/exp/icr_converge/ICR_20190601-195303.eps} 
% 	} 
% 	\subfigure[]{ 
% 		\includegraphics[width=2.0in]{image/exp/icr_converge/ICR_20190601-195508.eps} 
% 	} 	
% 	\caption{The estimated kinematic parameters and the associated $\pm 3 \sigma$ envelopes in two tests.}
%  ~\label{fig:icr_converge}
% \end{figure}
%

\begin{figure}[H]
	\centering 
	\subfigure[]{ 
		\includegraphics[width=2.0in]{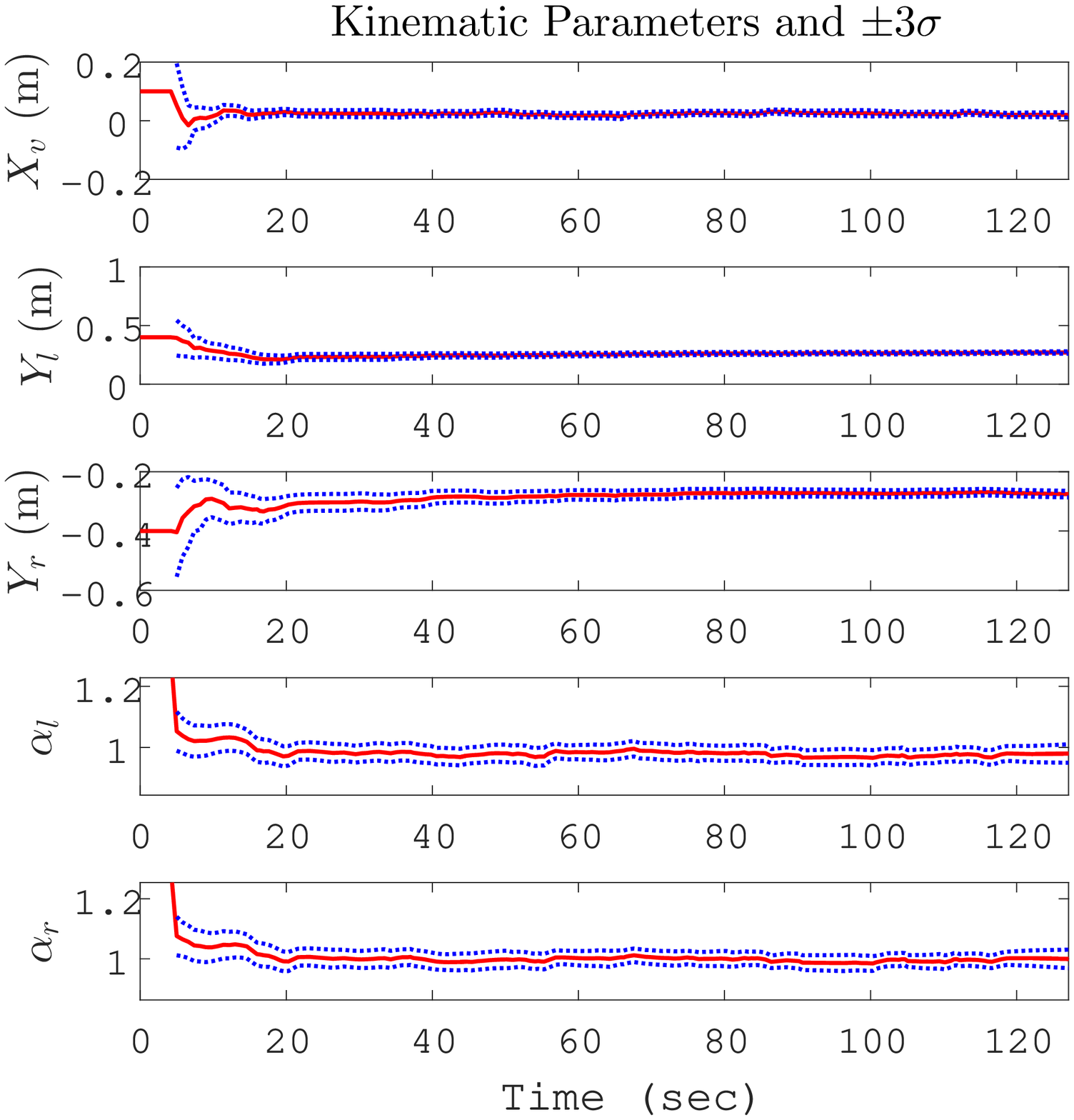} 
	} 
	\subfigure[]{ 
		\includegraphics[width=2.0in]{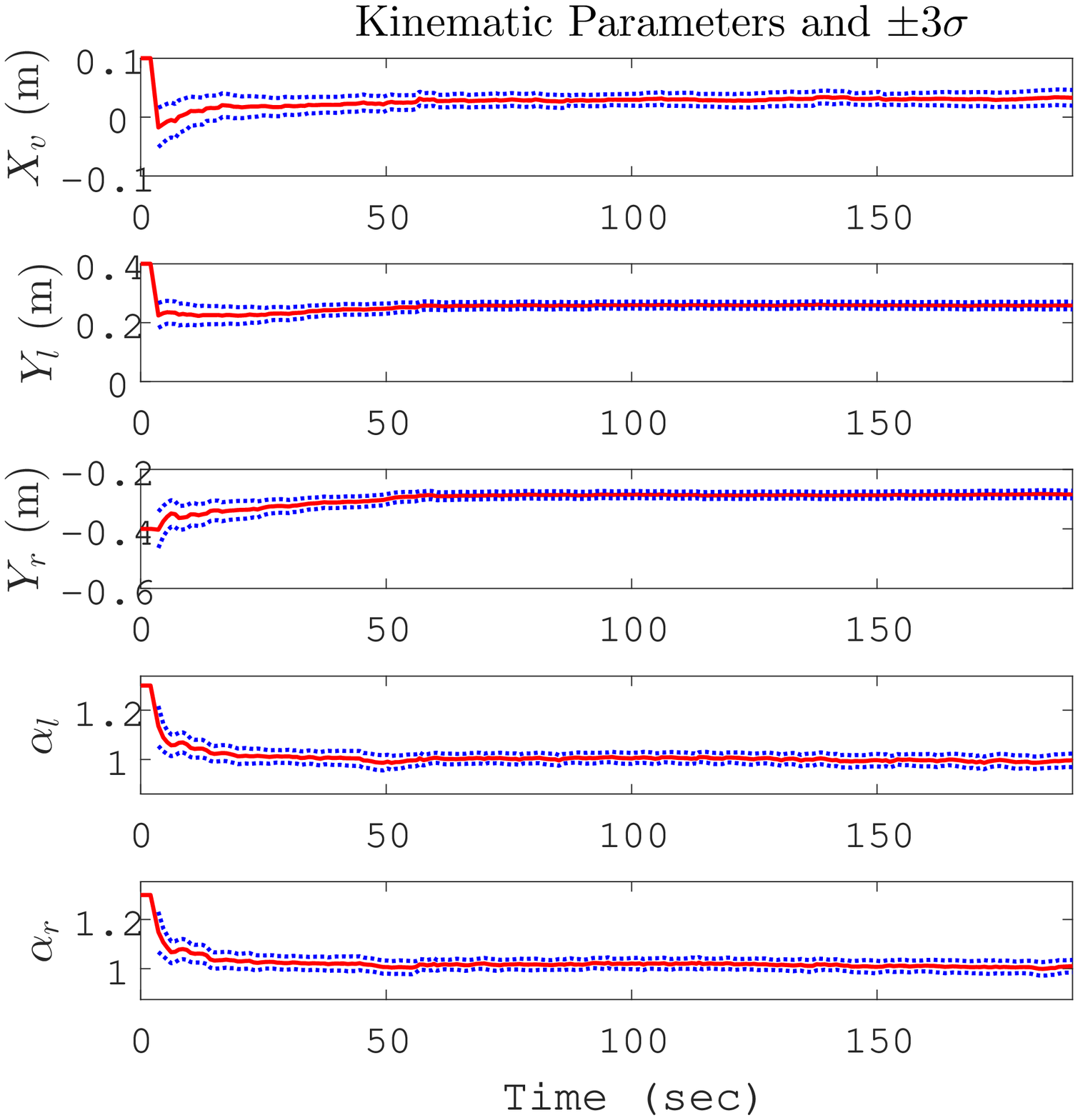} 
	} 	
	\caption{The estimated kinematic parameters and the associated $\pm 3 \sigma$ envelopes in two tests.}
 ~\label{fig:icr_converge}
\end{figure}
\section{Conclusions}

In this paper, we have developed a novel kinematics-constrained visual-inertial localization method specialized for skid-steering robots,
where a tightly-coupled sliding-window BA serves as the estimation engine for fusing multi-modal measurements.  
In particular, we have explicitly modeled the kinematics of skid-steering robots using both track ICRs and scale factors, 
in order to compensate for complex track-to-terrain interactions, imperfectness of mechanical design and terrain smoothness.
%
% The motion manifold constraints are also leveraged, which formulate the movement form of ground robots.
%
Moreover, we have carefully examined  the observability analysis, showing that the kinematic parameters are observable under general motion.
Extensive real-world validations confirm that online kinematic estimation significantly improves localization.

%\input{tex/appendix}
%%%%%%%%%%%%%%%%%%%%%%%%%%%%%%%%%%%%%%%%%%%%%%%%%
%================================================

% Finally our references
{
% NEW VERSION USING BIBLATEX
%\vspace{0.05cm}
\def\bibfont{\footnotesize}
\printbibliography
}

%% Finally our references
%{
%	% \newpage
%	\vspace{0.05cm}
%	%\def\bibfont{\footnotesize}
%	% \def\bibfont{\scriptsize}
%	
%	\bibliographystyle{ieeetr}
%	\bibliography{main}
%	
%}

\end{document}